\documentclass{article}

\PassOptionsToPackage{numbers, compress}{natbib}

\usepackage[preprint]{neurips_2026}

\usepackage[utf8]{inputenc}
\usepackage[T1]{fontenc}
\usepackage{hyperref}
\usepackage{url}
\usepackage{booktabs}
\usepackage{amsfonts}
\usepackage{nicefrac}
\usepackage{microtype}
\usepackage{xcolor}

\usepackage{enumitem}
\usepackage{amsmath}
\usepackage{amssymb}
\usepackage{mathtools}
\usepackage{amsthm}
\usepackage{microtype}
\usepackage{graphicx}
\usepackage{subcaption}
\usepackage{multirow}
\usepackage{pifont}
\usepackage[table]{xcolor}

\usepackage{booktabs}
\usepackage{multirow}
\usepackage{makecell}
\usepackage{array}
\usepackage{graphicx}
\usepackage{wrapfig}

\theoremstyle{plain}
\newtheorem{theorem}{Theorem}[section]

\newtheorem{lemma}[theorem]{Lemma}
\newtheorem{corollary}[theorem]{Corollary}
\theoremstyle{definition}

\theoremstyle{remark}

\newcommand{\Cact}{\mathcal{W}^{\text{act}}}

\title{Efficient Serving for Dynamic Agent Workflows with Prediction-based KV-Cache Management}

\author{%
  \textbf{Haoyu Zheng}\textsuperscript{1}\quad
  \textbf{Fangcheng Fu}\textsuperscript{3}\quad
  \textbf{Jia Wu}\textsuperscript{4}\quad
  \textbf{Binhang Yuan}\textsuperscript{5}\quad
  \textbf{Yongqiang Zhang}\textsuperscript{2} \\
  \textbf{Hao Wang}\textsuperscript{1}\quad
  \textbf{Yuanyuan Zhu}\textsuperscript{1}\quad
  \textbf{Xiao Yan}\textsuperscript{1}\quad
  \textbf{Jiawei Jiang}\textsuperscript{1}\thanks{Corresponding author: \texttt{jiawei.jiang@whu.edu.cn}} \\[0.6em]
  \textsuperscript{1}\,Wuhan University \quad
  \textsuperscript{2}\,Dameng Database \quad
  \textsuperscript{3}\,Shanghai Jiao Tong University \\
  \textsuperscript{4}\,Macquarie University \quad
  \textsuperscript{5}\,HKUST
}

\newcommand{\stitle}[1]{\vspace{0em}\noindent{\bf #1\/}}
\newcommand{\substitle}[1]{\newline\noindent\textbf{#1}\hspace{0.5em}\ignorespaces}
\newcommand{\squishlist}{
	\begin{list}{$\bullet$}
		{ \setlength{\itemsep}{1pt}
			\setlength{\parsep}{1pt}
			\setlength{\topsep}{2.5pt}
			\setlength{\partopsep}{0.5pt}
			\setlength{\leftmargin}{1em}
			\setlength{\labelwidth}{1em}
			\setlength{\labelsep}{0.6em}
		}
	}
\newcommand{\squishend}{
	\end{list}
}
\newcommand{\squishenumatelist}{
  \begin{enumerate}[
    itemsep=1pt,
    parsep=1pt,
    topsep=2.5pt,
    partopsep=0.5pt,
    leftmargin=1.5em,
    labelwidth=1em,
    labelsep=0.6em
  ]
}
\newcommand{\squishenumatend}{
  \end{enumerate}
}

\begin{document}

\maketitle

\vspace{-20pt}
\begin{abstract}
\vspace{-5pt}

  LLM-based workflows compose specialized agents to execute complex tasks, and these agents usually share substantial context, allowing KV-Cache reuse to save computation. Existing approaches either manage KV-Cache at agent level and fail to exploit the reuse opportunities within workflows, or manage cache at the workflow level but assume that each workflow calls a static sequence of agents. However, practical workflows are typically dynamic, where the sequence of invoked agents and thus induced cache reuse opportunities depend on the context of each task. To serve such dynamic workflows efficiently, we build a system dubbed PBKV (\textbf{P}rediction-\textbf{B}ased \textbf{KV}-Cache Management). For each workflow, PBKV predicts the agent invocations in several future steps by fusing the guidance from historical workflows and context of the target workflow. Based on the predictions, PBKV estimates the reuse potential of cache entries and keeps the high-potential entries in GPU memory. To be robust to prediction errors, PBKV utilizes the predictions conservatively during both cache eviction and prefetching. Experiments on three workflow benchmarks show that PBKV achieves up to $1.85\times$ speedup over LRU on dynamic workflows, and up to $1.26\times$ speedup over the SOTA baseline KVFlow on the static workflow.

\end{abstract}
\vspace{-5pt}
\section{Introduction}
\vspace{-5pt}
\label{sec:Intro}

Large Language Model (LLM)-based multi-agent systems have become a popular paradigm for complex reasoning and automation.
Frameworks such as LangChain~\citep{langchain2022}, AutoGen~\citep{wu2024autogen}, and MetaGPT~\citep{hong2024metagpt} decompose a complex task into a group of collaborating \textit{agents} (e.g., a Planner, Coder, and Tester in code-generation pipelines), where each agent is invoked as a single LLM request.
Such multi-agent collaboration on a single task is known as an \emph{agentic workflow}.

Modern inference engines (e.g., vLLM~\citep{vllm-kwon2023efficient} and SGLang~\citep{sglang_NEURIPS2024_724be447}) share Key/Value tensors (i.e., KV-Cache) across requests, turning expensive prefill into cheap cache lookups.
In an agentic workflow, agents naturally share substantial context, including the system prompt, tool/agent descriptions, and the history accumulated by upstream agents~\citep{wu2026dualpathbreakingstoragebandwidth, langchain2022, langgraph2024}.
We term this reuse pattern \emph{workflow-level cache reuse}.
Ideally, the KV-Cache produced by one agent should be retained for the downstream agents, where the cache hit rate can exceed 90\%~\citep{wu2026dualpathbreakingstoragebandwidth}.
However, in practice, this potential is bounded by \textit{limited} GPU memory, making effective KV-Cache management crucial in multi-agent serving.

KV-Cache management is fundamentally a problem of \textit{predicting} future access patterns.
Existing inference engines commonly adopt the Least Recently Used (LRU) policy, which rests on a simple heuristic that the longest-idle cache is unlikely to be reused.
However, in multi-agent settings, cache reuse is determined by the \textit{workflow's structure} rather than temporal locality.
A cache entry may remain idle across multiple agent invocations (e.g., during tool calls or transitions between agents) before being reused, and LRU may evict it prematurely during this idle period.
As a result, LRU's temporal-locality assumption is misaligned with the structural nature of workflow-level cache reuse.

An idealized alternative is the classic Belady's algorithm~\citep{belady1966study}, which achieves \textit{offline}-optimal cache eviction by evicting the cache whose next access is farthest in the future.
The recent work KVFlow~\citep{pan2025kvflow} adapts this idea to multi-agent settings by assuming a \textit{predefined} and \textit{static} agent step graph (i.e., a global DAG that specifies the invocation orders of all agents), and evicting the KV-Cache of the agent whose next invocation is farthest away (i.e., with the largest `steps-to-execution').
This design rests on a strong assumption: the future invocation order of agents is \textit{known a priori}.
This assumption fails on realistic dynamic workloads, where workflows are \textit{runtime-dependent}.
For example, a Tester may send the code back for rewriting; and a Retriever may spawn new sub-queries based on intermediate results.
The realistic workflow structure cannot be determined in advance, thus effective KV-Cache management must \textbf{shift from assuming the future to predicting it}.

\stitle{Challenge.}
However, realizing this shift faces two conflicting challenges:
\textit{(i)} agentic workloads are inherently hard to predict~\citep{luo2025autellix,sui2026act_paste,wagenländer2026scepsyservingagenticworkflows_Scepsy,zhang2026agentic1,zhang2026agentic2}, because the stochastic decoding of LLMs injects uncertainty at each step, which propagates along the workflow path and yields high variability in the resulting agent invocation trace.
Industrial studies from Microsoft~\citep{barke2026agentrx_ms1} and IBM~\citep{moshkovich2025beyond_ibm1,moshkovich2025taming_ibm2} report that such uncertainty is common and consequential in agentic systems.
\textit{(ii)} Prediction errors are disproportionately costly.
The cost of mistakes is amplified by the workflow: a cache miss in multi-agent serving may force re-prefill of tens of thousands of tokens \textit{accumulated} along the entire workflow context.

Taken together, the structure of agentic workloads exposes substantial reuse potential, yet inevitably entails prediction errors and the steep cost of erroneous evictions. This motivates the central question of our work: \textit{in a dynamic multi-agent system, how can we design a KV-Cache management framework that (i) is robust to predictor quality and (ii) benefits continuously even from imperfect predictors?}

\stitle{Our Solution.}
Our key insights: (i) \textit{multi-step} prediction with \textit{complementary} signals can effectively replace the static assumption; and (ii) \textit{conservative} designs can yield \textit{stable} and \textit{robust} gains.

\vspace{-1pt}
Specifically, \textbf{for the predictor}, we propose two design principles:
(i) \textit{Complementary signal fusion}, which combines \textit{cross-workflow} agent transition patterns with \textit{per-request} semantics;
(ii) \textit{Multi-step horizon}, which is motivated by the fact that cache reuse distances in agentic workflows can span several invocations (e.g., across a retry loop), so a single-step predictor cannot distinguish cache that will be reused later from useless cache. Multi-step prediction can prevent this myopia \textit{without} assuming a static workflow.
\textbf{For the cache manager}, we likewise propose two designs: (i) \textit{Hierarchical eviction.} Guided by the observation that the
private cache of a terminated workflow has negligible reuse potential regardless of any prediction, we reclaim such \textit{retired} cache first and then rank the
remaining active cache by a lookahead reuse score
from the predictor, yielding \textit{stable} gains in the common case
and \textit{graceful} degradation under poor predictions. (ii) \textit{Conservative prefetching}, which carefully \textit{trades off} the cost and benefit of prefetching and
consumes \textit{only} otherwise-idle GPU space and PCIe bandwidth, so that
it does not backfire even under poor predictions.

\vspace{-1pt}
Together, these designs form a \textbf{P}rediction-\textbf{B}ased \textbf{KV}-Cache management framework, which we call PBKV. We evaluate PBKV on realistic dynamic workflows, where it reduces the average workflow latency by up to $1.85\times$ and improves the KV-Cache hit rate by up to $2.55\times$ over LRU. On static workflows, PBKV outperforms KVFlow on these metrics by up to $1.26\times$ and $1.39\times$, respectively.
Furthermore, PBKV's predictor is \textit{pluggable}; to ensure \textit{robustness} across predictors, we prove that the performance is \textit{Lipschitz-continuous} in prediction error, establishing \textit{graceful degradation}.

\stitle{Contributions.}
In summary, we make the following contributions:
{\setlength\topsep{0pt}\setlength\partopsep{1pt}
\begin{itemize}

    \item We formalize KV-Cache management in multi-agent serving as a problem of \textit{predicting} future access patterns, and provide design guidelines for predictors in this setting.

    \item We propose PBKV, guided by two design principles, i.e., \textit{hierarchical eviction} and \textit{conservative prefetching}, that together identify cache with low reuse value for reclamation while preserving high-value cache. We further provide a \textit{Lipschitz guarantee} that PBKV's performance degrades \textit{gracefully} even under prediction error.

    \item We implement and evaluate PBKV across diverse workloads and LLMs. PBKV consistently outperforms LRU and the SOTA baseline in both cache hit rate and end-to-end performance.

\end{itemize}
}
\vspace{-10pt}
\section{Preliminaries}
\vspace{-5pt}
\label{sec:background}

\stitle{LLM-based Multi-Agent Serving.}
A multi-agent application can be abstracted as a directed graph $G = (V, E)$, where each node in $V$ corresponds to an agent and each edge in $E$ denotes an admissible transition between agents. Since $G$ enumerates \textit{every possible} transition pattern, we refer to it as the \textbf{global call graph}. Figure~\ref{fig:global-call-graph} shows a representative example for a code-generation task. Notably, $G$ can contain loops, capturing retry loops common in realistic agentic workflows.
A \textbf{workflow} is a single execution instance on $G$, consisting of an ordered sequence of agent invocations $a_1, a_2, \ldots$, where each $a_i \in V$ and each consecutive pair $(a_i, a_{i+1}) \in E$.
Each \textbf{agent invocation} is an LLM request with a specific prompt, and thus benefits from KV-Cache reuse to reduce latency.

Agents typically share a large amount of KV-Cache, which we classify into two categories: (i) \emph{global cache}, shared across agent instances from \textit{different} workflows (e.g., the system prompt, tool/agent descriptions, and knowledge documents); and (ii) \emph{private cache}, produced by upstream agents of a particular workflow and reusable only by downstream agents of the \emph{same} workflow. Cross-workflow reuse of private cache is generically infeasible, because even under identical user prompts, the stochastic decoding of LLMs makes the workflows diverge within a few tokens.

\stitle{Radix Tree.}
PBKV is built on \textit{SGLang}~\cite{sglang_NEURIPS2024_724be447}, which organizes prefix-shared cache as a \textit{Radix Tree}, with each node holding a contiguous token segment reusable by all requests sharing that prefix.
\textit{HiCache}~\cite{sglang_hicache_2025} extends the cache into a two-tier hierarchy where cache evicted from GPU is retained in host memory and swapped back on a hit, avoiding full re-prefill at the cost of a PCIe transfer.

\vspace{-10pt}
\section{System Overview}
\vspace{-8pt}
\label{sec:overview}

\begin{figure}[t]
    \centering
    \begin{minipage}{0.287\textwidth}
        \centering
        \includegraphics[width=\linewidth]{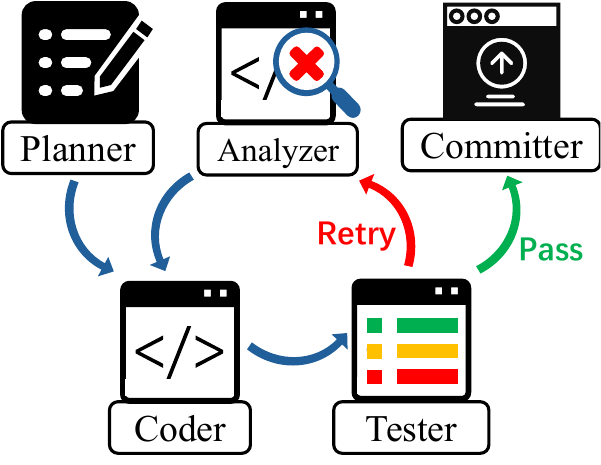}
\vspace{-13pt}
    \caption{A call graph for the code-generation task. The Tester conditionally triggers a retry path through Analyzer and Coder, i.e.,  a retry loop.}
    \label{fig:global-call-graph}
    \end{minipage}
    \hfill
    \begin{minipage}{0.648\textwidth}
        \centering
        \includegraphics[width=\linewidth]{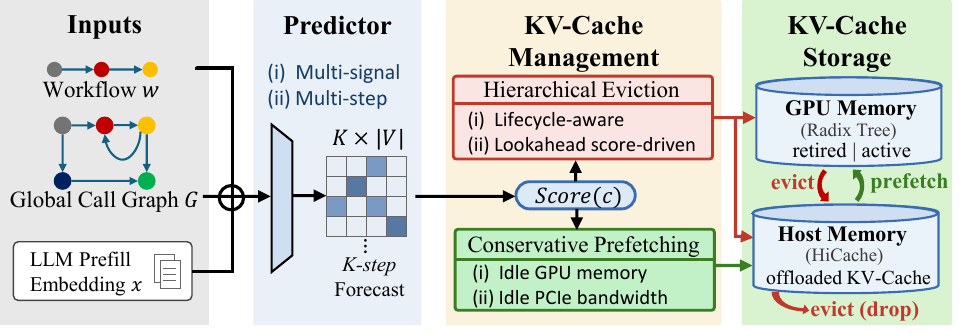}
\vspace{-13pt}
        \captionof{figure} {\textbf{System overview of PBKV.} For each active workflow $w$, the predictor produces a $K$-step forecast over upcoming agent invocations. The forecast drives a shared scoring function $Score(c)$, which feeds both a hierarchical eviction policy and a conservative prefetching policy on the two-tier KV-Cache storage.}
        \label{fig:overview}
    \end{minipage}
\vspace{-10pt}
\end{figure}

\vspace{-3pt}
The underlying design principle of PBKV is: predicting upcoming agent invocations to evaluate the \textit{reuse value} of existing KV-Cache, which then drives both KV-Cache eviction and prefetching. As shown in Figure~\ref{fig:overview}, PBKV consists of three components: (i) a \textit{predictor} that produces a multi-step forecast for each active workflow, (ii) a set of \textit{KV-Cache management policies} that translate the forecasts into eviction and prefetching decisions, and (iii) a \textit{two-tier KV-Cache storage} organized as a Radix Tree on GPU memory and HiCache on host memory, on which the policies operate.

\vspace{-5pt}
The predictor serves as the foundation of PBKV.
\textbf{First}, it fuses two complementary signals: (i) \textit{graph-level} agent transition patterns \textit{shared} across requests, encoded in the global call graph $G$;
and (ii) \textit{workflow-level} \textit{specifics} of the current request, reused from the LLM prefill embedding $x$.
\textbf{Second}, it emits $K$ probability distributions over upcoming agent invocations, rather than only the next one, preventing \textit{myopic eviction}. For instance, in Figure~\ref{fig:global-call-graph}, when the current agent is Analyzer, a single-step predictor would reveal only the next Coder invocation and miss the Tester re-invocation two steps later, risking a wrongful eviction of the Tester's cache and a costly re-prefill.

\vspace{-3pt}
The KV-Cache management policies consist of an eviction policy and a prefetching policy. They share a common reuse scoring mechanism (i.e., multi-step lookahead and cross-workflow aggregation) and a common design philosophy (i.e., embedding deterministic guardrails within a probabilistic system). Specifically, \textit{(i) hierarchical eviction} reclaims retired cache from terminated workflows first as it carries no reuse potential, and only after it is exhausted does score-driven eviction take over the active cache. Thus, performance degrades gracefully when the predictor is unreliable, while the upside is preserved when it is accurate. \textit{(ii) Conservative prefetching} is motivated by the asymmetry that a prefetch always pays its cost while the benefit materializes only when the prediction is correct. PBKV therefore restricts prefetching to otherwise-idle GPU space and PCIe bandwidth, so that even under poor predictions, prefetching neither displaces valuable cache nor competes for on-path bandwidth.

\vspace{-3pt}
In the following section, we elaborate on the design of each component and their coordination.

\vspace{-10pt}
\section{Design of PBKV}
\label{sec:design}
\vspace{-6pt}

In this part, we design a predictor as the foundation of subsequent KV-Cache Management.
\vspace{-3pt}

\subsection{Workflow Prediction}
\vspace{-5pt}
\label{sec:prediction}
\stitle{Predictor Design.} Building on the principles above (i.e., multi-signal fusion and multi-step horizon), we instantiate the predictor with two desiderata: (i) leveraging structural priors in the global call graph $G$, and (ii) generalizing to unseen workflow prefixes at runtime to accommodate the dynamic nature of agentic workflows. We adopt \textbf{GraphSAGE}~\citep{hamilton2017inductive_Graphsage} as the backbone, which satisfies both: (i) it operates directly on the \textit{graph} through neighborhood sampling and aggregation, preserving the structural priors, and (ii) it is \emph{inductive}, performing forward inference on unseen prefixes.

\vspace{-3pt}
\begin{wrapfigure}{r}{0.49\textwidth}
\vspace{-1.2em}
\centering
\includegraphics[width=\linewidth]{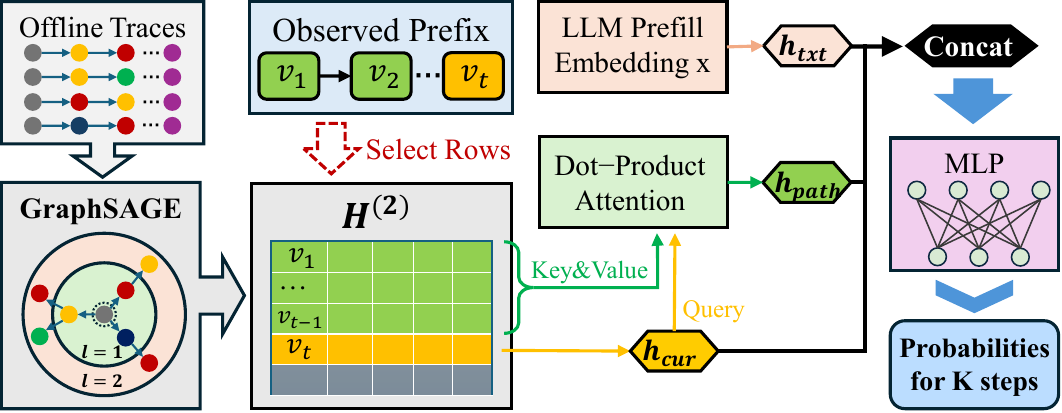}
\caption{\textbf{Architecture of the predictor.} It fuses a topology-aware agent embedding from GraphSAGE ($h_{cur}$), an attention-based workflow prefix summary ($h_{path}$), and a semantic signal reused from prefill ($h_{txt}$), then jointly predicts the next $K$ agent probability distributions via an MLP.}
\label{fig:predictor}
\vspace{-1.2em}
\end{wrapfigure}
As shown in Figure~\ref{fig:predictor},
the predictor fuses three complementary streams. \emph{(i) Topology-aware agent embedding} $\mathbf{h}_{\mathrm{cur}}$:
each agent's learnable embedding is refined by two GraphSAGE layers over a transition matrix estimated from offline traces, so that the resulting $\mathbf{H}^{(2)}$ encodes both the agent's identity and its neighborhood in $G$. \emph{(ii) Attention-based history aggregation} $\mathbf{h}_{\mathrm{path}}$: since two workflows at the same current agent may have arrived through very different prefixes, we summarize prior agent representations via scaled dot-product attention with $\mathbf{h}_{\mathrm{cur}}$ as the query, upweighting the most informative prefix agents over naive mean pooling. \emph{(iii) Semantic signal from prefill} $\mathbf{h}_{\mathrm{txt}}$:
to distinguish requests with the same prefix but diverging intent, we project the post-norm hidden state of the last prefill token, obtaining the signal essentially for free.
The three streams are concatenated and passed through a two-layer MLP that jointly emits logits over agents for each of the next $K$ steps in a single forward pass, avoiding autoregressive error accumulation while keeping per-invocation latency to that of a single inference.
Full architecture is detailed in Appendix~\ref{app:predictor-details}.

\vspace{-2pt}
\stitle{Training and Performance.} The predictor is trained on offline invocation traces with cross-entropy loss, pairing each prefix and its prefill embedding with the next $K$ agents as labels (padding positions after $\langle\mathrm{END}\rangle$ are masked). The resulting predictor has roughly 350K parameters and processes a batch of 1,024 requests in 1.56 ms, rendering its runtime overhead negligible. On the HoVer~\citep{jiang2020hover} dataset with the LangChain~\citep{langchain2022} framework, training on 1$K$ traces achieves an accuracy of 0.94 at 1-step and 0.77 at 3-step (500-trace test set). Scaling curves and comparisons against alternative predictors are provided in Appendix~\ref{app:predictor}, with full hyperparameters in the supplementary code.

\vspace{-5pt}
\subsection{Lookahead KV-Cache Eviction}
\vspace{-3pt}
\label{sec:eviction}

Building on the multi-step predictor, we design a \emph{lookahead} KV-Cache eviction policy.

\vspace{-5pt}
\subsubsection{Base: Lifecycle-Aware KV-Cache Eviction}
\vspace{-3pt}
\label{subsec:lifecycle}

\vspace{-1.5pt}
\stitle{Our observation and design.}
Once a workflow terminates, its residual private cache (which we call \textit{retired cache}) has negligible reuse potential and should be reclaimed first.
Under LRU, such cache can only age out passively, and LRU may evict still-valuable cache of other workflows during this aging window.
Additionally, retired cache is not all equivalent, we further rank them by the number of workflows that have accessed them. Thus, the popular shared prefix can be preserved longer.

\vspace{-1.5pt}

\stitle{Implementation.}
We tag each cache node with the workflows that have accessed it. The server continuously listens for and records workflow termination messages from clients. Any cache node whose associated workflows have \emph{all} terminated is tagged as retired and prioritized for eviction. Despite its simplicity, this change alone improves the average hit rate by up to $1.66\times$ in our experiments.

\vspace{-5pt}
\subsubsection{Lookahead Score-Driven KV-Cache Eviction}
\vspace{-3pt}
\label{subsec:score}
Retired cache is not always abundant. Under high load, once the retired cache is drained, lifecycle-aware eviction degenerates back to LRU.
We notice that the reuse likelihood also \textit{varies} across the active cache. For example, (i) among the \textit{global cache}, descriptions of popular agents are reused more frequently than those of rarely used ones; and (ii) among the \textit{private cache}, retention priority should scale with the predicted probability that the owning agent is invoked in the future.
We therefore extend eviction from a binary lifecycle label to a continuous score that reflects predicted reuse value.

\vspace{-2pt}
\stitle{Cross-Workflow Value Aggregation.}
KV-Cache is shared across workflows, thus its reuse value should be aggregated \textit{globally}.
Therefore, the score of a cache node should grow with \textit{(i)} the number of workflows likely to reuse it, and \textit{(ii)}  each workflow's reuse probability.
We further annotate each cache node $c$ with a per-workflow access indicator vector $A_w(c)\in \{0,1\}^{|V|}$.
For example, if $c$ has been accessed by Agents~1 and~3 from workflow
$w$, then $A_w(c)=[1,0,1,\dots]$.
For each workflow $w$, the predictor emits a next-step distribution, which we decompose into an agent-access probability vector $P_w$ and a termination probability $p_{w,\langle\text{END}\rangle}$. The single-step reuse value of $c$ is then defined as

\vspace{-10pt}

\begin{equation}
\mathrm{Value}(c) \;=\; \sum_{w \,\in\, \Cact(c)} A_w(c) \cdot P_w
\label{eq:single-step-value}
\end{equation}

\vspace{-10pt}

where $\Cact(c)$ denotes the set of active workflows associated with node $c$.
Intuitively, $\mathrm{Value}(c)$ \textit{aggregates} the probability that each workflow's next invocation will touch node $c$.
Notably, this cross-workflow aggregation \emph{naturally} protects the global cache and popular-prefix cache.

\vspace{-2pt}
\begin{wrapfigure}{r}{0.46\textwidth}
\vspace{-1.2em}
\centering
\includegraphics[width=\linewidth]{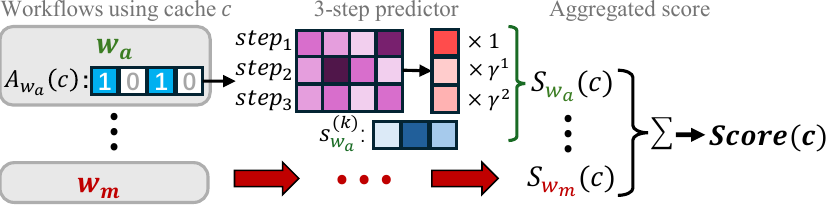}
    \caption{\textbf{Computing the cross-workflow reuse score.} For each active workflow $w$ accessing cache node $c$, the $K$-step ($K$=3 here) predictor outputs per-step access probabilities, which are weighted by the survival probability $s^{(k)}$ and confidence factor $\gamma^{k-1}$ and summed across $m$ workflows as $Score(c)$.}
\label{fig:score}
\vspace{-1.2em}
\end{wrapfigure}
\stitle{From One-Step to $K$-Step Lookahead.} As described above, a single-step view remains myopic, so we extend the score to a $K$-step horizon by leveraging the predictor's multi-step outputs.
Two practical effects must be accounted for: (i) predictions further into the future are objectively less reliable, regardless of the predictor's stated confidence; and (ii) steps following a predicted termination are meaningless and would pollute the score if left in.
To address them, we introduce a confidence decay factor $\gamma < 1$ and a \emph{cumulative survival probability}
$
s_w^{(k)} \;=\; \prod_{j=1}^{k-1} \bigl(1 - p_{w,\langle\text{END}\rangle}^{(j)}\bigr),
$
i.e., the probability that workflow $w$ remains active at step $k$.
Figure~\ref{fig:score} illustrates the computation, which we formalize as:

\vspace{-12pt}
\begin{equation}
    \mathrm{Score}(c)  \;=\; \sum_{k=1}^{K} \gamma^{k-1} \sum_{w \,\in\, \Cact(c)} s_w^{(k)} \cdot A_w(c) \cdot P_w^{(k)}
    \label{eq:multistep-score}
\end{equation}

\vspace{-6pt}

\stitle{Implementation.} On every workflow state change (i.e., new agent invocation or termination), the predictor refreshes its forecast and updates the scores of the affected cache nodes. On eviction, nodes are sorted in ascending order of $\mathrm{Score}(c)$ and evicted until the requested space is freed.

\vspace{-4pt}
\subsubsection{Hierarchical Eviction: Unifying the Two Policies}
\vspace{-2pt}
\label{subsec:hierarchical}

The two policies can be naturally combined because retired cache nodes receive a score of $0$ under Equation~\ref{eq:multistep-score} (as they satisfy $\Cact(c) = \emptyset$).
That said, the ``no value'' judgment of retired cache is \emph{deterministic}, whereas the ``zero score'' of an active node is merely a \emph{probabilistic} estimate bounded by both the horizon $K$ and the predictor's accuracy.
We therefore adopt a \emph{hierarchical eviction} strategy, in which active cache is spared until all retired cache is drained.
The underlying design philosophy is to \emph{embed deterministic guardrails within a probabilistic system}, so that performance degrades gracefully under unreliable predictions while preserving the upside under good predictions.

\vspace{-4pt}
\subsection{Conservative KV-Cache Prefetching}
\vspace{-4pt}
\label{sec:prefetch}

Building on the predictor, a complementary optimization is to proactively load likely-to-be-reused cache nodes into GPU memory before they are hit, leveraging SGLang's HiCache infrastructure to hide transfer latency behind ongoing decode steps rather than exposing it on the request critical path.

\vspace{-2pt}
\stitle{Prefetching Principle.}
At its core, prefetching trades \textit{known-valuable} GPU cache for \textit{speculatively valuable} host cache, with the \textit{fixed costs} in scheduling and PCIe bandwidth.
Given the inevitable prediction errors in dynamic workflows, the asymmetry between deterministic cost and probabilistic benefit motivates a \textit{conservative} principle that prefers risk avoidance over aggressive speculation.

\vspace{-2pt}
\stitle{Prefetching Design.}
Concretely, we do \textit{not} evict any active cache to make room for prefetched data, for three considerations: (i) as argued above, evicting known-valuable cache is inherently risky under dynamic workflows; (ii) under high concurrency, such evictions can disrupt the radix tree's prefix structure; and (iii) if the evicted active cache is hit shortly after, the system pays an additional GPU-reload cost or, in the worst case, a full re-prefill.
\textit{Instead}, we restrict the prefetch region to the union of currently \textit{free} space and \textit{retired} cache (§4.2.1), whose total size we denote as $S_a$.
This design guarantees that prefetching perturbs existing GPU-resident cache only minimally, if at all.

\vspace{-2pt}
Beyond GPU space, prefetching also consumes PCIe bandwidth and therefore \textit{competes} with prefill requests. To avoid interference, PBKV activates prefetching only on pure-decode batches (>90\% of all batches in our measurements).
Within such a batch, we bound the transferable volume as
$S_{bw}=Bandwidth \cdot StepDuration$,
where the PCIe bandwidth and decode step duration are dynamically updated by their runtime averages.
The final budget is $S=\min\{S_a,\ S_{bw}\}$, which typically hides prefetching cost behind a single decode step and uses only otherwise-idle bandwidth.

\vspace{-2pt}
To rank candidate nodes in the host memory, we reuse the one-step reuse value $Value(c)$ defined in Eq.~\ref{eq:single-step-value}.
We deliberately restrict ranking to the \textit{one-step} horizon because prefetching is naturally incremental: a node that will be reused two steps ahead can simply be fetched at the next step. In contrast, eviction must look ahead across $K$ steps, since a mistaken eviction incurs a costly reload or even a full re-prefill. Selecting the optimal subset under budget $S$ is formally a knapsack problem and hence \textit{NP-hard}. Since prefetch decisions lie on the scheduler's critical path, we use a lightweight greedy heuristic that loads candidate nodes in descending order of value until the budget is exhausted.

An aggressive variant that permits evicting active cache for prefetch space is evaluated in Appendix~\ref{app:aggressive-prefetch}.

\vspace{-8pt}
\section{Discussion}
\vspace{-5pt}
\label{sec:discussion}

\vspace{-3pt}
\stitle{Positioning.} While our primary contribution is KV-cache management, we additionally provide design guidelines for predictors in this setting (Section~\ref{sec:related_work} explains why existing predictors fall short) and instantiate one accordingly. PBKV deliberately keeps the predictor module \textit{pluggable}, so that it can benefit from future predictors. We further prove in Appendix~\ref{app:smoothness} that PBKV degrades \textit{gracefully} even under poor predictions (i.e., the degradation is \textit{Lipschitz-continuous}), that is:

\vspace{-5pt}

\begin{theorem}
\label{thm:smoothness}
Let $\widehat{E}_B$ and $E_B^{\star}$ be PBKV's eviction set and the cost-minimizing set under the ground truth, respectively, and $\epsilon_n^{\gamma}$ the per-node prediction error. The eviction cost regret of PBKV satisfies

\vspace{-13pt}

\begin{equation*}
0 \;\leq\; \mathcal{R}(B) \;:=\; \mathcal{L}\!\bigl(\widehat{E}_B\bigr) - \mathcal{L}\!\bigl(E_B^{\star}\bigr) \;\leq\; \frac{1}{2(1-\gamma)} \!\!\sum_{c\,\in\,\widehat{E}_B \triangle E_B^{\star}} \!\epsilon_c^{\gamma},
\end{equation*}

\vspace{-11pt}

where $\widehat{E}_B \triangle E_B^{\star}$ denotes the symmetric difference between PBKV's eviction set and the optimum. The bound $\mathcal{R}(B) \to 0$ when the prediction is perfect and grows linearly in the prediction error.
\end{theorem}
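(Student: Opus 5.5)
We make the objects concrete first, since the statement leaves them implicit. Each cache node $c$ has a ground-truth discounted reuse value $V^\star_c=\sum_{k\ge1}\gamma^{k-1}r_c^{(k)}$, where $r_c^{(k)}\in[0,1]$ is the true (expected) reuse indicator at step $k$. PBKV instead uses the estimate $\widehat V_c=\mathrm{Score}(c)$ from Eq.~\ref{eq:multistep-score}, suitably normalized by the number of touching workflows so that each per-step term lies in $[0,1]$. The per-node prediction error is $\epsilon_c^{\gamma}$, a discounted per-step error. We take $\epsilon_c^{\gamma}$ to be the supremum over steps of the per-step discrepancy $|\hat r_c^{(k)}-r_c^{(k)}|$, where steps beyond the horizon $K$ count as prediction $0$. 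Then
\[
|\widehat V_c-V^\star_c|\le \sum_{k\ge1}\gamma^{k-1}\epsilon_c^{\gamma}=\frac{\epsilon_c^{\gamma}}{1-\gamma}.
\]
This geometric-series step is where the $1/(1-\gamma)$ in the statement comes from. The eviction cost of a set $E$ meeting the space budget $B$ is $\mathcal L(E)=\sum_{c\in E}V^\star_c$, the reuse value lost by evicting $E$. The optimal set $E_B^\star$ minimizes $\mathcal L$ over budget-feasible sets. $\widehat E_B$ minimizes $\widehat{\mathcal L}(E)=\sum_{c\in E}\widehat V_c$ over the same feasible family, which is the ascending-score sort, assuming unit-size nodes (or, with sizes, per-unit values and an exact selection). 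Retired nodes have $V^\star_c=\widehat V_c=0$ and so contribute no error. The hierarchical rule is therefore consistent with both minimizations and we may ignore it.

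The lower bound $\mathcal R(B)\ge0$ is immediate from the optimality of $E_B^\star$ for $\mathcal L$. For the upper bound we use the standard "optimize a perturbed objective" sandwich. Since $\widehat{\mathcal L}(\widehat E_B)\le\widehat{\mathcal L}(E_B^\star)$, we have
\[
\mathcal R(B)=\bigl[\mathcal L(\widehat E_B)-\widehat{\mathcal L}(\widehat E_B)\bigr]+\bigl[\widehat{\mathcal L}(\widehat E_B)-\widehat{\mathcal L}(E_B^\star)\bigr]+\bigl[\widehat{\mathcal L}(E_B^\star)-\mathcal L(E_B^\star)\bigr].
\]
The middle bracket is $\le0$. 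Nodes in $\widehat E_B\cap E_B^\star$ cancel between the first and third brackets, so
\[
\mathcal R(B)\le\sum_{c\in\widehat E_B\setminus E_B^\star}(V^\star_c-\widehat V_c)+\sum_{c\in E_B^\star\setminus\widehat E_B}(\widehat V_c-V^\star_c).
\]
Each term is bounded by $\epsilon_c^{\gamma}/(1-\gamma)$, which already gives a linear bound of $\frac1{1-\gamma}\sum_{c\in\triangle}\epsilon_c^\gamma$.

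The main obstacle is the extra factor $\tfrac12$. I would get it by centering the estimates. The regret is invariant under a common shift of all $\widehat V_c$ on the symmetric difference, because feasible sets of equal cardinality or size exchange the same amount. It is also invariant under replacing the interval estimate by its midpoint. More directly, each node is known only to lie in an interval $[V^\star_c-\delta_c,V^\star_c+\delta_c]$ with $\delta_c=\epsilon^\gamma_c/(1-\gamma)$. A swap between $a\in\widehat E_B\setminus E_B^\star$ and $b\in E_B^\star\setminus\widehat E_B$ happens only if $\widehat V_a\le\widehat V_b$ while $V^\star_a>V^\star_b$, which forces $V^\star_a-V^\star_b\le\delta_a+\delta_b$. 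Pairing the two difference sets by an exchange argument (matroid basis exchange for the cardinality/knapsack-sorted structure) gives
\[
\mathcal R(B)=\sum_{\text{pairs}}(V^\star_a-V^\star_b)\le\sum_{\text{pairs}}(\delta_a+\delta_b).
\]
Each node of $\triangle$ appears in exactly one pair, so this equals $\sum_{c\in\triangle}\delta_c$. Reaching $\tfrac12$ then requires that $\epsilon_c^\gamma$ be defined as the full width of the per-step error range, as in an $\ell_1$/total-variation error where $\|P-\widehat P\|_{TV}=\tfrac12\|P-\widehat P\|_1$. I expect the appendix's definition to be of this TV form, and I would fix it that way so the constants match exactly. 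I would flag this normalization as the delicate point.

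Finally, the limiting claims follow directly. If predictions are perfect, every $\epsilon_c^\gamma=0$ and the right-hand side vanishes, so $\mathcal R(B)=0$. The bound is a nonnegative linear combination of the $\epsilon_c^\gamma$, so it is Lipschitz in the prediction error with constant $\frac1{2(1-\gamma)}$.
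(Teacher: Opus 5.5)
Your exchange step is exactly the paper's, and it is fine: $\sum_{c\in\widehat E_B\setminus E_B^\star}\widehat{\mathrm{SCORE}}(c)\le\sum_{c\in E_B^\star\setminus\widehat E_B}\widehat{\mathrm{SCORE}}(c)$ gives $\mathcal{R}(B)\le\sum_{c\in\widehat E_B\triangle E_B^\star}\delta_c$ with $\delta_c=|\mathrm{SCORE}(c)-\widehat{\mathrm{SCORE}}(c)|$. The gap is the per-node bound $\delta_c\le\frac{1}{2(1-\gamma)}\epsilon_c^\gamma$, which carries all the substance of the theorem, and you never prove it. You define $\epsilon_c^\gamma$ as a discrepancy of the per-step score components $\hat r_c^{(k)}$ themselves. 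That makes the result a restatement of the exchange step rather than a statement about the predictor's error. You then obtain the $\frac12$ by choosing the normalization after the fact. The centering paragraph does not help. A common shift of all estimates leaves both $\widehat E_B$ and your signed bound unchanged, because the shifts cancel over two difference sets of equal size. Your pairing argument only reproduces $\sum_{\triangle}\delta_c$.

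Several of your modeling choices also break the statement.
\begin{itemize}
\item Normalizing $\mathrm{Score}(c)$ by $|\mathcal{W}_{\mathrm{act}}(c)|$ changes the ranking, so the resulting set is no longer PBKV's eviction set. No normalization is needed, since the sum over $w$ can be absorbed into $\epsilon_c^\gamma$.
\item Extending $V_c^\star$ to an infinite horizon with $\hat r_c^{(k)}=0$ for $k>K$ makes $\epsilon_c^\gamma>0$ even for a perfect predictor. This contradicts your closing claim that $\mathcal{R}(B)\to0$. The cost must use the same $K$-step horizon, i.e.\ $\mathcal{L}(E)=\sum_{c\in E}\mathrm{SCORE}(c)$ evaluated at the true distributions.
\item Most importantly, a uniform per-step bound summed geometrically is not where $\frac{1}{1-\gamma}$ comes from.
\end{itemize}

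Here is the argument you need. Measure error on the predicted distributions, $\epsilon_w^{(k)}=\|P_w^{(k)}-\widehat P_w^{(k)}\|_1$, with $\epsilon_c^\gamma=\sum_{w\in\mathcal{W}_{\mathrm{act}}(c)}\sum_{k=1}^{K}\gamma^{k-1}\epsilon_w^{(k)}$. Consider the step-$k$ term $s_w^{(k)}A_w(c)^\top P_w^{(k)}$.
\begin{itemize}
\item The access factor is the probability of the event $\mathcal{O}_w(c)$, so total variation bounds its error by $\frac12\epsilon_w^{(k)}$. Your TV guess is right, but the $\frac12$ enters here, not through a normalization choice.
\item The survival factor is a product of terms $1-p_{w,\langle\mathrm{END}\rangle}^{(j)}$. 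Telescoping plus the same TV bound on $\{\langle\mathrm{END}\rangle\}$ bounds its error by $\frac12\sum_{j<k}\epsilon_w^{(j)}$.
\item The inequality $|uv-\hat u\hat v|\le|u-\hat u|+|v-\hat v|$ on $[0,1]$ then bounds the whole term's error by $\frac12\sum_{j\le k}\epsilon_w^{(j)}$.
\end{itemize}
Multiply by $\gamma^{k-1}$, sum over $k$, and exchange sums using $\sum_{k=j}^{K}\gamma^{k-1}\le\gamma^{j-1}/(1-\gamma)$. So $\frac{1}{1-\gamma}$ comes from errors accumulating through the survival product, and it requires $\epsilon_c^\gamma$ to be a discounted sum over steps rather than a supremum. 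Suppose you instead take a supremum over steps of $\epsilon_w^{(k)}$. Then the constant becomes $\frac12\sum_{k=1}^{K}k\gamma^{k-1}$, which tends to $\frac{1}{2(1-\gamma)^2}$. This is sharp to first order in $\eta$ for a single workflow that never terminates and always invokes an agent in $\mathcal{O}_w(c)$, but whose predicted END probability is $\eta$ at every step.
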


\vspace{-5pt}

\stitle{Complexity.}
PBKV maintains the score of candidate nodes in a \textit{heap} structure.
Inspecting the top candidate takes $O(1)$, while popping, insertion, and score updates take $O(\log n)$ for $n$ cache nodes, keeping the scheduler overhead negligible even under high concurrency (measured in Appendix~\ref{app:scheduler-overhead}).

\vspace{-2pt}
\stitle{Limitations.}
(i) The predictor needs to be trained on a \textit{specific} workload.
(ii) Cache-reuse patterns across agents are not universal but rather depend on the \textit{client's specific} message-passing convention. The indicator $A_w(c)$ in Eq.~\ref{eq:single-step-value} captures a \textit{basic form}, namely the set of agents that have directly accessed $c$ in workflow $w$. It admits straightforward \textit{customization}, e.g., when a framework specifies that agent $j$ inherits cache accessed by agent $i$, $A_w(c)[j]$ can be set to 1 whenever $A_w(c)[i]=1$.

\vspace{-5pt}
\section{Experimental Evaluation}
\vspace{-10pt}
\label{sec:exp}

In this section, we evaluate PBKV on realistic multi-agent workloads and ablate each component.

\vspace{-5pt}
\subsection{Experimental Settings}
\label{sec:exp-setup}

\vspace{-3pt}
\stitle{Testbed and Models.}
Our experiments are conducted on a server with $8\times$ NVIDIA A6000 (48\,GB) GPUs interconnected via NVLink, 128 virtual CPU cores, 512\,GB of memory, and 20\,GB/s PCIe bandwidth. We use Qwen3-14B and Qwen3-32B (two-way tensor parallelism) as the base LLMs.

\vspace{-3pt}
\stitle{Baselines.}
We compare PBKV against
\textbf{(i) LRU} on SGLang+HiCache (\texttt{HICACHE\_RATIO}=$1$, allocating equal capacity on GPU and host memory), the default policy of SGLang.
\textbf{(ii) KVFlow}~\cite{pan2025kvflow}, the SOTA workflow-aware policy, whose eviction is \textit{driven} by the `steps-to-execution' distance on a static DAG. Since this distance is \textit{undefined} under runtime-dependent \textit{loops}, we construct a static workflow scenario for fair comparison.
To isolate the contribution of each component, we further evaluate two ablation variants of PBKV: \textbf{(i) PBKV-LAE}, which enables only Lifecycle-Aware Eviction; and \textbf{(ii) PBKV-HE}, which adds Hierarchical Eviction on top of PBKV-LAE but disables prefetching.

\vspace{-3pt}
\stitle{Workloads.}
We evaluate PBKV on three representative multi-agent workloads, each pairing a public benchmark with a widely-adopted agent framework to reflect \textit{real-world} deployment:
\textit{(i) fact verification}, using the HoVer~\citep{jiang2020hover} dataset with the LangChain~\citep{langchain2022} agent framework;
\textit{(ii) code generation}, using SWE-bench~\citep{jimenez2024swebench} with Microsoft AutoGen~\citep{wu2024autogen};
and \textit{(iii) document analysis}, using FinanceBench~\citep{islam2023financebench} with CrewAI~\citep{crewai2023}, which constitutes a \textit{static} workflow for direct comparison against KVFlow.
Following prior work~\citep{pan2025kvflow,chen2026concur}, we set the number of concurrent workflows to induce memory pressure, thereby effectively evaluating KV-Cache management policies.

\vspace{-3pt}
\stitle{Metrics.}
We evaluate each policy along three \textit{complementary} dimensions:
\textit{(i) End-to-end workflow latency}, measured from workflow submission to final completion;
\textit{(ii) Per-agent TTFT}, defined as the mean time-to-first-token across every agent invocation along a workflow;
\textit{(iii) Average KV-Cache hit rate} (token-level) on GPU memory, which \textit{directly} reflects the quality of cache management.

\vspace{-8pt}
\subsection{Main Results.}
\vspace{-2pt}
\label{subsec:main-results}

\begin{table}[t]
\centering
\caption{Performance of policies across workloads and LLMs. The metrics are described above. As different workloads have different GPU memory footprints, we set the concurrency limit to $72$/$24$/$48$, respectively, and study its sensitivity in Section~\ref{sec:sensitivity}. The \textit{static} workload is designed for comparison with \smash{\colorbox{orange!15}{KVFlow}}, which assumes a predefined static call graph. (Full results with std see Appendix~\ref{app:full-results})}

\label{tab:main}
\resizebox{\linewidth}{!}{
\setlength{\tabcolsep}{2.5pt}
\begin{tabular}{ll@{\hskip 0pt}cccccc}
\toprule
\multirow{2}{*}[-0.5ex]{\textbf{Workload}} & \multirow{2}{*}[-0.5ex]{\textbf{Policy}} & \multicolumn{3}{c}{\textbf{Qwen3-32B}} & \multicolumn{3}{c}{\textbf{Qwen3-14B}} \\
\cmidrule(lr){3-5} \cmidrule(lr){6-8}
& & \textbf{Lat. (s)$\downarrow$} & \textbf{PA. TTFT (s)$\downarrow$} & \textbf{Hit Rate (\%)$\uparrow$} & \textbf{Lat. (s)$\downarrow$} & \textbf{PA. TTFT (s)$\downarrow$} & \textbf{Hit Rate (\%)$\uparrow$} \\
\midrule
\multirow{4}{*}{\shortstack[l]{\textbf{HoVer}\\\textbf{+ LangChain}\\ (with iterative\\ refinement \textit{loops})}}
& LRU       & $189.66$ & $16.65$ & $27.09$ & $139.59$ & $10.90$ & $28.79$ \\
& PBKV-LAE  & $146.67$ & $11.83$ & $44.91$ & $119.13$ & $9.57$  & $41.18$ \\
& PBKV-HE   & $108.86$ & $8.95$  & $66.01$ & $80.62$  & $6.53$  & $64.15$ \\

& \cellcolor{green!10}Full PBKV & \cellcolor{green!10}$102.60$ & \cellcolor{green!10}$8.22$ & \cellcolor{green!10}$69.10$ & \cellcolor{green!10}$76.15$ & \cellcolor{green!10}$5.50$ & \cellcolor{green!10}$68.29$ \\
\midrule
\multirow{4}{*}{\shortstack[l]{\textbf{SWE-bench}\\\textbf{+ AutoGen}\\(with retry\\ \textit{loops})}}
& LRU       & $271.90$ & $5.04$ & $46.34$ & $214.74$ & $4.18$ & $48.53$ \\
& PBKV-LAE  & $233.86$ & $3.73$ & $59.51$ & $177.50$ & $3.26$ & $60.41$ \\
& PBKV-HE   & $177.22$ & $2.46$ & $75.31$ & $141.53$ & $2.19$ & $73.36$ \\
& \cellcolor{green!10}Full PBKV & \cellcolor{green!10}$160.18$ & \cellcolor{green!10}$2.27$ & \cellcolor{green!10}$79.94$ & \cellcolor{green!10}$118.61$ & \cellcolor{green!10}$2.05$ & \cellcolor{green!10}$77.07$ \\
\midrule
\multirow{3}{*}{\shortstack[l]{\textbf{FinanceBench}\\\textbf{+ CrewAI}\\(\textit{static})}}
& LRU       & $130.34$ & $12.93$ & $27.95$ & $96.11$ & $9.02$ & $26.58$ \\
& \cellcolor{orange!15}KVFlow    & \cellcolor{orange!15}$101.57$ & \cellcolor{orange!15}$9.91$ & \cellcolor{orange!15}$39.87$ & \cellcolor{orange!15}$80.13$ & \cellcolor{orange!15}$7.42$ & \cellcolor{orange!15}$39.65$ \\
& \cellcolor{green!10}Full PBKV & \cellcolor{green!10}$80.53$ & \cellcolor{green!10}$8.00$ & \cellcolor{green!10}$53.44$ & \cellcolor{green!10}$65.01$ & \cellcolor{green!10}$6.06$ & \cellcolor{green!10}$55.07$ \\
\bottomrule
\vspace{-25pt}
\end{tabular}
}
\end{table}

\vspace{-2pt}
\stitle{Observations.}
We take HoVer + LangChain workload as a representative case.
Under $K$=3 and $\gamma$=0.7, Table~\ref{tab:main} shows that PBKV reduces end-to-end latency and per-agent TTFT by up to $1.85\times$ and $2.03\times$ over LRU, respectively. This directly confirms that PBKV delivers substantial gains in both system efficiency and quality of service (i.e., QoS).
The KV-Cache hit rate exposes the source of these gains:
\textbf{(i) LRU} achieves a hit rate below 30\% on both models,
confirming that it fails to capture the workflow-level reuse.
\textbf{(ii) PBKV-LAE}, which merely identifies and evicts \textit{retired cache} first, already lifts the hit rate by up to $1.66\times$ over LRU, showing that even simple \textit{lifecycle-awareness} is highly effective.
\textbf{(iii) PBKV-HE} additionally incorporates the workflow predictor and reuse scoring, pushing the hit rate to 66.01\%, because it assesses the reuse value of active cache through \textit{cross-workflow aggregation} and \textit{multi-step lookahead}. As a result, eviction decisions remain informed even after the retired cache is exhausted, and the policy never degenerates back to LRU.
\textbf{(iv) Full PBKV} introduces \textit{speculative prefetching} on top of PBKV-HE and attains a hit rate of 69.10\% ($2.55\times$ over LRU).

\vspace{-2pt}
We further note that the hit rates are higher near the start and the end of each run, when the pressure is low.
Restricted to the steady-state interior, LRU's hit rate drops to $\sim$10\% while full PBKV's sustains $\sim$50\%.
Their behavior is detailed in Section~\ref{subsec:mechanism}.
Results of other tests exhibit similar trends.

\vspace{-2pt}
\stitle{Ablation Study.} Comparing the three PBKV variants, the contributions of Lifecycle-Aware Eviction, Hierarchical Eviction, and speculative prefetching are \textit{incremental}. The step from PBKV-LAE to PBKV-HE is pronounced, whereas the step from PBKV-HE to full PBKV is \textit{modest}.
This modest gap stems from two factors: (i) the prediction errors of the multi-step predictor on realistic dynamic workflows, and (ii) our deliberately \textit{conservative} prefetching design (Section~\ref{sec:prefetch}), which refuses to gamble known-valuable active cache for speculative gains.
Appendix~\ref{app:aggressive-prefetch} evaluates an aggressive alternative that evicts active cache to free prefetch space, confirming that our conservative variant delivers stable benefits while avoiding the backfire of aggressive prefetching under poor predictions.

\vspace{-2pt}
\stitle{Performance on Static Workflow.} Table~\ref{tab:main} also reports the results under a static workflow (FinanceBench + CrewAI). KVFlow infers the steps-to-execution from the predefined call graph and prioritizes the cache accordingly, yielding a moderate improvement over LRU. PBKV achieves a \textit{further} gain, which we attribute to two design choices. \textit{(i) Lifecycle-aware eviction.} Like LRU, KVFlow can also mistakenly evict active cache that will still be reused, whereas PBKV's lifecycle-aware eviction prevents this misjudgment. \textit{(ii) Cross-workflow reuse aggregation.} KVFlow prioritizes each cache node by the \textit{minimum} steps-to-execution across all related workflows, which captures \textit{when} it will be reused but not how \textit{popular} it is. PBKV instead \textit{aggregates} the predicted reuse contribution from every active workflow accessing the node, so the score of popular cache grows naturally.

\vspace{-3pt}
\stitle{Remark on Static Workflows.} Given the deterministic agent invocation order in \textit{static} workflows, PBKV could be further tuned by safely relaxing several mechanisms (e.g., adopting more aggressive prefetching and disabling the multi-step confidence decay $\gamma$). However, we retain these mechanisms in our evaluation for compatibility and consistency across both dynamic and static workloads.

\begin{figure}[t]
  \begin{subfigure}[b]{0.34\textwidth}
    \centering
    \includegraphics[width=\linewidth]{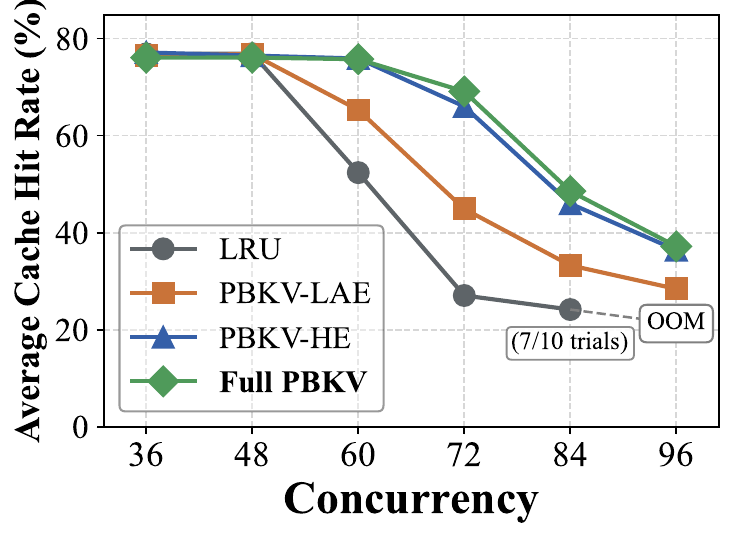}
    \vspace{-15pt}
    \caption{Average cache hit rate vs.\ concurrency. Excessive concurrency causes LRU to run out of memory (OOM).}
    \label{fig:micro-concurrency}
  \end{subfigure}
  \hfill
  \begin{subfigure}[b]{0.37\textwidth}
    \centering
    \includegraphics[width=\linewidth]{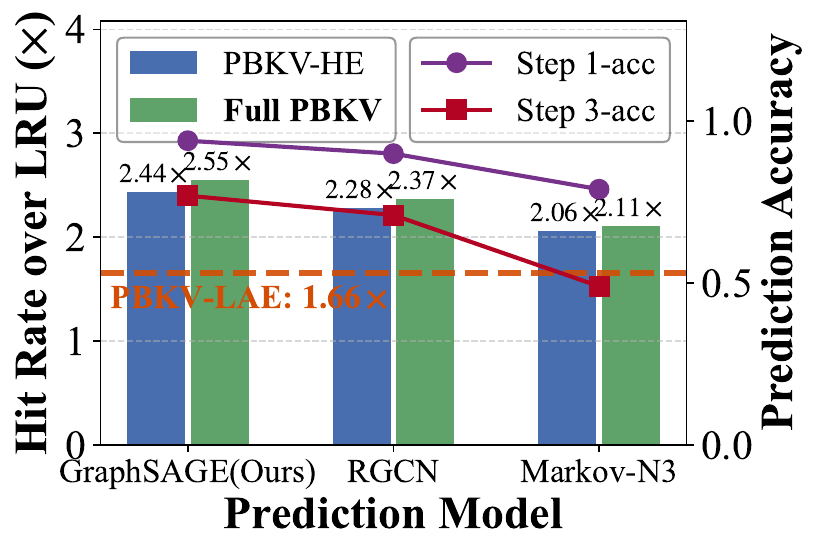}
    \vspace{-15pt}
    \caption{Prediction accuracy of different backbones (lines) and the corresponding hit-rate ratio of PBKV over LRU (bars).}
    \label{fig:micro-predictor-quality}
  \end{subfigure}
  \hfill
  \begin{subfigure}[b]{0.25\textwidth}
    \centering
    \includegraphics[width=\linewidth]{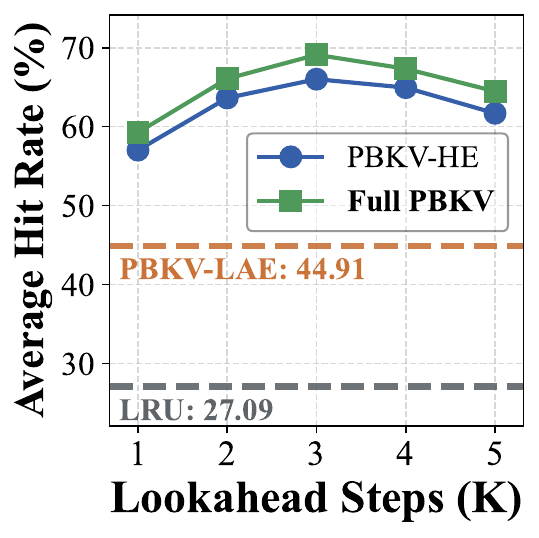}
    \vspace{-15pt}
    \caption{Average cache hit rate vs.\ lookahead horizon $K$ (i.e., prediction step).}
    \label{fig:micro-lookahead}
  \end{subfigure}

    \vspace{-3pt}
  \caption{Sensitivity analysis of PBKV and its variants on HoVer + LangChain with Qwen3-32B.}
  \label{fig:micro}
\vspace{-15pt}
\end{figure}

\vspace{-5pt}
\subsection{Sensitivity Analysis}
\vspace{-5pt}
\label{sec:sensitivity}

In this section, we vary several configurations to characterize their impact on PBKV and its variants. All experiments below are conducted on the HoVer + LangChain workload with Qwen3-32B.

\vspace{-3pt}
\stitle{Concurrency.}
It directly governs GPU memory pressure. As shown in Figure~\ref{fig:micro-concurrency}, all hit rates degrade as concurrency rises, yet PBKV consistently and greatly outperforms LRU.
Below 60 workflows, evictions are too rare to differentiate any policy.
At the other extreme, LRU exhausts memory and crashes in 3/10 trials at concurrency 84 and 10/10 at 96.
We therefore adopt a concurrency of 72 for HoVer + LangChain in the main experiments to evaluate all policies under reasonable pressure.

\vspace{-3pt}
\stitle{Predictor Backbone.} PBKV treats the predictor as a pluggable component. We replace our GraphSAGE backbone with R-GCN and a third-order Markov model (Markov-N3), each trained on the same 1$K$ traces. As shown in Figure~\ref{fig:micro-predictor-quality}, the variants exhibit lower prediction accuracy, which translates monotonically into lower cache hit rates.
Even so, the weakest backbone still strictly outperforms the prediction-free PBKV-LAE, confirming that PBKV is robust to predictor quality and that a reasonable predictor typically delivers additional benefits on top of lifecycle awareness.

\vspace{-3pt}
\stitle{Lookahead Horizon.}
As shown in Figure~\ref{fig:micro-lookahead}, the best hit rate occurs at $K$=3: a smaller horizon induces myopic eviction, whereas a larger one dilutes the score with inaccurate long-range predictions. The gap between full PBKV and PBKV-HE stays nearly constant across $K$ by design, since prefetching ranks candidates by the one-step value alone (Section~\ref{sec:prefetch}) and is decoupled from $K$.

\vspace{-3pt}
\stitle{Additional Studies.} Due to space constraints, we defer the following studies to the appendix:
(i) the accuracy of more predictor variants and how training scale affects it (Appendix~\ref{app:predictor});
(ii) sensitivity to the confidence decay coefficient $\gamma$ (Appendix~\ref{app:gamma-sensitivity});
(iii) aggressive vs.\ conservative prefetching across varying prediction accuracies (Appendix~\ref{app:aggressive-prefetch});
(iv) scheduling overhead (Appendix~\ref{app:scheduler-overhead});
and (v) a theoretical analysis of PBKV's graceful degradation under poor prediction (Appendix~\ref{app:smoothness}).

\vspace{-5pt}
\subsection{Why PBKV Works: Analysis of Cache Behavior.}
\vspace{-3pt}
\label{subsec:mechanism}

To understand the sources of PBKV's gains, we trace the KV-Cache hit rate of each policy throughout a full run in Figure~\ref{fig:hit-rate-curve}. Key events (marked by dashed vertical lines) divide the run into several phases:
\substitle{(i) Warm-up (0-30\,s).} With ample GPU memory, all policies rapidly exceed 80\% without eviction, \textit{confirming our motivation} in Section~\ref{sec:Intro} that multi-agent workflows expose abundant reuse potential.
\substitle{(ii) Onset of memory pressure (30-55\,s).} Around 30\,s, GPU memory saturates and forces evictions, causing LRU's hit rate (red) to drop sharply.
PBKV-LAE (orange) follows the same trajectory, as no workflow has terminated to release retired cache.
In contrast, PBKV-HE (blue) and full PBKV (green) decline only \textit{gradually}, since their scoring identifies low-value \textit{active} cache for eviction.
\substitle{(iii) Retired-cache window (55-80\,s).} From 55\,s onward, the first-completed workflows release retired cache.
PBKV-LAE immediately capitalizes on this signal and recovers visibly.
PBKV-HE and PBKV enjoy the same benefit through their hierarchical eviction.
Thereafter, LRU stays around a 10\% hit rate and only recovers near the end when the draining request pool relieves pressure.
\substitle{(iv) Retired-cache drained (80-115\,s).} Once the retired cache is drained, PBKV-LAE reverts to LRU.
Thereafter, its hit rate recovers only \textit{briefly} when batches of workflows release retired cache concurrently.
PBKV-HE and PBKV also decline but remain well above LRU, thanks to their scoring.
\substitle{(v) Steady-state serving (115-245\,s).} PBKV and PBKV-HE greatly outperform others, showing the lookahead score remains effective under prolonged pressure.
PBKV further improves PBKV-HE by a \textit{modest but consistent} margin, reflecting the bounded but stable benefit of \textit{conservative} prefetching.
\substitle{(vi) Tail phase (after 245\,s).} As the request pool drains, eviction pressure subsides and the hit rate climbs back to roughly 90\% by completion (around 285\,s).
Full PBKV enters and exits this phase \textit{first},
achieving the lowest average job completion time and the highest throughput among all policies.

\begin{figure}[t]
  \centering
    \centering
    \includegraphics[width=\textwidth]{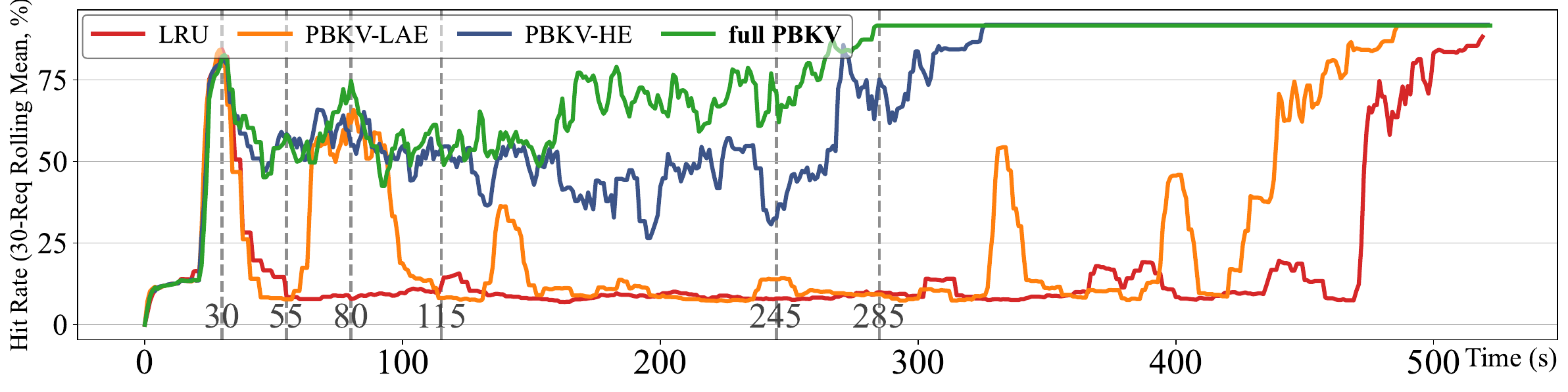}
    \vspace{-18pt}
    \caption{KV-Cache hit rate of each policy over time on the HoVer + LangChain workload, served by Qwen3-14B under a concurrency limit of 72. Dashed vertical lines mark key events.}
    \label{fig:hit-rate-curve}
\vspace{-15pt}
\end{figure}

\vspace{-3pt}
\stitle{Summary.} This phase-by-phase analysis reveals the \textit{complementary} roles of PBKV's three mechanisms. Lifecycle-aware eviction provides a deterministic improvement whenever retired cache is available; hierarchical eviction sustains improvements under prolonged pressure via multi-step lookahead and cross-workflow aggregation; and speculative prefetching contributes a bounded but stable improvement on top. Together, they enable PBKV to maintain a substantial and sustained lead over LRU, directly translating into lower latency (as shown in Table~\ref{tab:main}) and higher throughput.

\vspace{-10pt}
\section{Related Work}
\vspace{-5pt}
\label{sec:related_work}
\stitle{KV-Cache Management for Multi-Agent Serving.}
Modern inference engines (e.g., vLLM~\citep{vllm-kwon2023efficient} and SGLang~\citep{sglang_NEURIPS2024_724be447}) manage KV-Cache via paged
attention and radix-tree prefix caching under a default LRU policy. Orthogonal efforts such as HiCache~\citep{sglang_hicache_2025} and LMCache~\citep{liu2025lmcache} extend this with hierarchical GPU-CPU-disk offloading, while KVCOMM~\citep{ye2026kvcomm} and CacheBlend~\citep{yao2025cacheblend} target non-prefix reuse. All of them are oblivious to workflows.
Some recent systems try to close this gap, Continuum~\citep{li2026continuum} targets ReAct-style single-agent tool calling, pinning KV entries with a dynamic TTL learned from per-tool duration statistics, whereas KVFlow~\citep{pan2025kvflow} assumes a predefined Agent Step Graph and uses `steps-to-execution' scores for eviction and prefetching.
They are not well-suited to realistic multi-agent workflows, which are typically dynamic and involve runtime-dependent loops. PBKV fills this gap by driving \emph{workflow-level} KV-Cache management with multi-step \textit{prediction} for each workflow.

\vspace{-3pt}
\stitle{Agentic Workflow Prediction.}
A growing body of work predicts the structure of agentic workflows.
Parrot~\citep{lin2024parrot} introduces semantic variables to recover inter-request dependencies,
Ayo~\citep{tan2025ayo} compiles LLM applications into primitive-level dataflow graphs for end-to-end optimization,
and Autellix~\citep{luo2025autellix} elevates programs to first-class scheduling units.
Such abstractions assume a \textit{(near-)static} DAG or dataflow graph and struggle to faithfully express the conditional loops common in practical multi-agent workflows.
Another thread of work instead reasons about workflow structure online through speculative execution; e.g.,
PASTE~\citep{sui2026act_paste} exploits recurring tool-call patterns and predictable data dependencies to speculatively invoke tools,
and Speculative Actions~\citep{ye2026speculative} generalizes speculation to agent actions.
These methods accommodate dynamic workflows but predict only the next step, which can lead to \textit{myopic eviction} when driving cache eviction; and autoregressive rollout also accumulates errors. Therefore, we develop a multi-step predictor customized for KV-Cache management.

\vspace{-10pt}
\section{Conclusion}
\vspace{-5pt}

We present PBKV for realistic dynamic multi-agent workflows.
First, PBKV proposes guidelines for a dedicated \textit{predictor}, i.e., (i) fusing complementary signals from the global call graph and per-request prefill embeddings, and (ii) forecasting over a \textit{multi-step} horizon to avoid myopic decisions.
Second, PBKV adopts a \textit{conservative} policy, i.e., (i) hierarchical eviction that combines lifecycle-aware reclamation with lookahead, cross-workflow-aggregated scoring, and (ii) conservative prefetching that consumes only otherwise-idle GPU space and PCIe bandwidth.
\textbf{Theoretically}, we prove that PBKV degrades gracefully under prediction error. \textbf{Empirically}, PBKV significantly outperforms LRU across models and workloads, and also outperforms the SOTA method on static workflows.

\bibliographystyle{unsrtnat}
\bibliography{references}

\appendix
\section{Main Results with Standard Deviations.}
\label{app:full-results}

\begin{table}[t]
\centering
\caption{Full results with standard deviations across multi-agent workloads. Each cell reports mean $\pm$ standard deviation over multiple runs. The \textit{static} workload (FinanceBench+CrewAI) is included for comparison with \smash{\colorbox{orange!15}{KVFlow}}, which assumes a predefined static call graph. Concurrency limits are 72 for HoVer+LangChain, 24 for SWE-bench+AutoGen, and 48 for FinanceBench+CrewAI.}
\label{apptab:full-results-std}
\resizebox{\linewidth}{!}{
\setlength{\tabcolsep}{4pt}
\begin{tabular}{lllccc}
\toprule
Model & Workload & Policy & Latency (s) $\downarrow$ & Per-Agent TTFT (s) $\downarrow$ & Cache Hit Rate (\%) $\uparrow$ \\
\midrule
\multirow{11}{*}{Qwen3-32B}
& \multirow{4}{*}{\shortstack[l]{HoVer\\+ LangChain}}
  & LRU                                  & $189.66 \pm 4.70$  & $16.65 \pm 0.78$ & $27.09 \pm 1.16$ \\
& & PBKV-LAE                             & $146.67 \pm 6.73$  & $11.83 \pm 0.74$ & $44.91 \pm 2.20$ \\
& & PBKV-HE                              & $108.86 \pm 6.93$  & $8.95 \pm 0.74$  & $66.01 \pm 2.79$ \\
& & \cellcolor{green!10}Full PBKV        & \cellcolor{green!10}$102.60 \pm 7.28$ & \cellcolor{green!10}$8.22 \pm 0.71$ & \cellcolor{green!10}$69.10 \pm 2.23$ \\
\cmidrule(l){2-6}
& \multirow{4}{*}{\shortstack[l]{SWE-bench\\+ AutoGen}}
  & LRU                                  & $271.90 \pm 9.14$  & $5.04 \pm 0.26$ & $46.34 \pm 2.01$ \\
& & PBKV-LAE                             & $233.86 \pm 12.70$ & $3.73 \pm 0.44$ & $59.51 \pm 2.92$ \\
& & PBKV-HE                              & $177.22 \pm 10.36$ & $2.46 \pm 0.37$ & $75.31 \pm 2.57$ \\
& & \cellcolor{green!10}Full PBKV        & \cellcolor{green!10}$160.18 \pm 10.89$ & \cellcolor{green!10}$2.27 \pm 0.35$ & \cellcolor{green!10}$79.94 \pm 3.26$ \\
\cmidrule(l){2-6}
& \multirow{3}{*}{\shortstack[l]{FinanceBench\\+ CrewAI\\(\textit{static})}}
  & LRU                                  & $130.34 \pm 5.37$ & $12.93 \pm 0.51$ & $27.95 \pm 0.82$ \\
& & \cellcolor{orange!15}KVFlow          & \cellcolor{orange!15}$101.57 \pm 5.27$ & \cellcolor{orange!15}$9.91 \pm 0.57$ & \cellcolor{orange!15}$39.87 \pm 1.43$ \\
& & \cellcolor{green!10}Full PBKV        & \cellcolor{green!10}$80.53 \pm 5.06$ & \cellcolor{green!10}$8.00 \pm 0.48$ & \cellcolor{green!10}$53.44 \pm 2.07$ \\
\midrule
\multirow{11}{*}{Qwen3-14B}
& \multirow{4}{*}{\shortstack[l]{HoVer\\+ LangChain}}
  & LRU                                  & $139.59 \pm 3.13$ & $10.90 \pm 0.35$ & $28.79 \pm 1.04$ \\
& & PBKV-LAE                             & $119.13 \pm 6.04$ & $9.57 \pm 0.58$  & $41.18 \pm 3.17$ \\
& & PBKV-HE                              & $80.62 \pm 5.13$  & $6.53 \pm 0.34$  & $64.15 \pm 1.99$ \\
& & \cellcolor{green!10}Full PBKV        & \cellcolor{green!10}$76.15 \pm 5.95$ & \cellcolor{green!10}$5.50 \pm 0.42$ & \cellcolor{green!10}$68.29 \pm 2.32$ \\
\cmidrule(l){2-6}
& \multirow{4}{*}{\shortstack[l]{SWE-bench\\+ AutoGen}}
  & LRU                                  & $214.74 \pm 6.98$ & $4.18 \pm 0.28$ & $48.53 \pm 3.02$ \\
& & PBKV-LAE                             & $177.50 \pm 9.05$ & $3.26 \pm 0.51$ & $60.41 \pm 4.62$ \\
& & PBKV-HE                              & $141.53 \pm 7.57$ & $2.19 \pm 0.43$ & $73.36 \pm 3.95$ \\
& & \cellcolor{green!10}Full PBKV        & \cellcolor{green!10}$118.61 \pm 8.06$ & \cellcolor{green!10}$2.05 \pm 0.47$ & \cellcolor{green!10}$77.07 \pm 3.71$ \\
\cmidrule(l){2-6}
& \multirow{3}{*}{\shortstack[l]{FinanceBench\\+ CrewAI\\(\textit{static})}}
  & LRU                                  & $96.11 \pm 4.88$ & $9.02 \pm 0.44$ & $26.58 \pm 0.86$ \\
& & \cellcolor{orange!15}KVFlow          & \cellcolor{orange!15}$80.13 \pm 4.16$ & \cellcolor{orange!15}$7.42 \pm 0.41$ & \cellcolor{orange!15}$39.65 \pm 1.84$ \\
& & \cellcolor{green!10}Full PBKV        & \cellcolor{green!10}$65.01 \pm 4.12$ & \cellcolor{green!10}$6.06 \pm 0.40$ & \cellcolor{green!10}$55.07 \pm 2.39$ \\
\bottomrule
\end{tabular}
}
\end{table}

In our main experiments, we evaluate LRU, KVFlow (only on the static workload due to its inherent limitation), and our PBKV along with its variants on two LLMs and three workloads. Each setting is repeated for 10 runs, with mean and standard deviation reported in Table~\ref{apptab:full-results-std}.
\section{Predictor Details: Architecture and Training}
\label{app:predictor-details}
This appendix complements Section~\ref{sec:prediction} with the full design of the predictor, including the rationale behind the GraphSAGE backbone, the detailed architecture, and the training procedure.

\begin{figure}[t]
  \centering
    \centering
    \includegraphics[width=0.8\textwidth]{Figs/predictor.pdf}
    \caption{\textbf{Architecture of the predictor.} It fuses a topology-aware agent embedding from GraphSAGE ($h_{cur}$), an attention-based workflow prefix summary ($h_{path}$), and a semantic signal reused from prefill ($h_{txt}$), then jointly predicts the next $K$ agent probability distributions via an MLP head.}
    \label{appfig:predictor}
\end{figure}

\stitle{Overview.}
The predictor adopts \textit{GraphSAGE}~\cite{hamilton2017inductive_Graphsage} as its backbone, fuses \textit{multiple complementary} signals, and jointly forecasts the next \textit{few} invoked agents (i.e., Multi-Step Prediction).

\stitle{Why GraphSAGE?}
Our predictor design follows two desiderata. \textit{First}, the predictor must serve \textit{all} possible workflows across every admissible agent transition, so it is desirable to leverage the structural priors encoded in the global call graph $G$. \textit{Second}, the dynamic nature of agentic workflows requires the predictor to generalize to unseen subgraphs encountered at runtime.

Guided by these desiderata, we adopt \textbf{GraphSAGE} as the backbone of predictor. \textit{First}, GraphSAGE learns node representations directly on the graph through neighborhood sampling and aggregation, thereby preserving the structural priors encoded in agent transitions, which an MLP would otherwise discard. \textit{Second}, GraphSAGE is an \emph{inductive} framework, which performs forward inference on unseen workflow prefix without retraining on the entire graph. This property aligns naturally with the dynamic nature of agentic workflows, and is preferable to transductive alternatives (e.g., GCN~\cite{kipf2017semisupervised_GCN}) that rely on a fixed training graph.
For completeness, we also implement and evaluate several baseline predictors (including MLPs, GCNs, and Markov models), detailed results are in Appendix~\ref{app:predictor}.

\stitle{Architecture.}
Figure~\ref{appfig:predictor} illustrates the architecture of predictor.
To forecast the future invocations of an active workflow, the predictor fuses three complementary streams as described below:

\emph{(i) Topology-aware agent representation.}
We assign each agent $v \in V$ a learnable embedding $\mathbf{e}_v \in \mathbb{R}^{d}$, stacked into $\mathbf{H}^{(0)} \in \mathbb{R}^{|V|\times d}$. $A \in \mathbb{R}^{|V|\times|V|}$ denotes the row-normalized forward \textit{transition matrix} estimated from the offline traces. We then apply two GraphSAGE propagation layers:

\begin{equation}
    \mathbf{H}^{(\ell)} = \mathrm{ReLU}\!\left( \bigl[\, \mathbf{H}^{(\ell-1)} \,\|\, A\mathbf{H}^{(\ell-1)} \,\bigr](\mathbf{W}^{(\ell)})^\top\right), \quad \mathbf{W}^{(\ell)} \in \mathbb{R}^{d \times 2d}, \quad \ell = 1, 2
\end{equation}

where $\|$ denotes feature-wise concatenation. Thus, for each agent, the resulting $\mathbf{H}^{(2)}$ encodes both its own identity (through $\mathbf{H}$) and the structural priors induced by its neighborhood in $G$ (through $A\mathbf{H}$).

\emph{(ii) Attention-based history aggregation.}
While $\mathbf{H}^{(2)}$ equips each agent with a topology-aware representation, it does not yet capture the state of a \emph{workflow}. In fact, two workflows residing at the same agent $v_t$ may have arrived through very different prefixes. To leverage such prefix information, we summarize the prefix via scaled \textit{dot-product attention}, using the current agent representation $\mathbf{h}_{\mathrm{cur}} = \mathbf{H}^{(2)}_{v_t}$ as the query and the prefix representations as both keys and values:
\begin{equation}
    \alpha_i \;=\; \frac{\exp\!\bigl( (\mathbf{W}_q \mathbf{h}_{\mathrm{cur}})^{\top} \mathbf{H}^{(2)}_{v_i} / \sqrt{d} \bigr)}{\sum_{j<t} \exp\!\bigl( (\mathbf{W}_q \mathbf{h}_{\mathrm{cur}})^{\top} \mathbf{H}^{(2)}_{v_j} / \sqrt{d} \bigr)}, \qquad \mathbf{h}_{\mathrm{path}} \;=\; \sum_{i<t} \alpha_i\, \mathbf{H}^{(2)}_{v_i}
\end{equation}
Compared with naive mean pooling, attention allows the predictor to upweight those prefix agents that are most informative of the current state.

\emph{(iii) Semantic signal from prefill.}
Graph topology and workflow history alone do not distinguish requests that share the same prefix but diverge in intent. To capture such \textit{request-specific} semantics, we reuse the post-norm hidden state $\mathbf{x}$ of the last prefill token and project it through a single linear layer, i.e., $\mathbf{h}_{\mathrm{txt}} = \mathrm{ReLU}(\mathbf{W}_t \mathbf{x})$. As $\mathbf{x}$ is a by-product of prefill, this signal is obtained essentially for free.

\stitle{Multi-step prediction head.}
The above streams are concatenated and fed into a two-layer MLP with dropout, which jointly emits logits over agents for each of the next $K$ steps.
Emitting all $K$ logits from a shared backbone in one forward pass further avoids autoregressive error accumulation and bounds per-invocation latency to that of a single inference.

\stitle{Training Strategy.}
We train the predictor on offline invocation traces.
For each position $t$ within a workflow, we pair the prefix $(v_1,\ldots,v_{t})$ and its prefill embedding with the future agents $(v_{t+1},\ldots,v_{t+K})$ as labels.
The token $\langle\mathrm{END}\rangle$ is treated as a regular target so that the termination probability $p_{w, \langle\mathrm{END}\rangle}$ is properly learned, and only the padding positions after $\langle\mathrm{END}\rangle$ in a target window are masked from the cross-entropy loss. The resulting predictor has roughly 350K parameters (orders of magnitude smaller than the served LLM) and processes a batch of 1,024 requests in 1.56 ms, rendering its runtime overhead negligible.
Detailed hyperparameter settings and training configurations are provided in the supplementary code.

\section{Workflow Prediction}
\label{app:predictor}

This appendix supplements Section~\ref{sec:prediction} with three additional
results: (i) the scaling behavior of our predictor across training-set
sizes (Section~\ref{app:scaling}), (ii) a comparison against alternative
predictor families (Section~\ref{app:full}), and (iii) the sensitivity
of prediction accuracy to which transformer layer the prefill semantic
signal is extracted from (Section~\ref{app:layer}).

\stitle{Setup.}
All experiments use the HoVer~\citep{jiang2020hover} dataset with the
LangChain~\citep{langchain2022} agent framework. We report top-$1$
accuracy at horizons $k{=}1, 2, 3$ (denoted $s_1, s_2, s_3$), averaged
over $10$ random seeds.

\subsection{Scaling Behavior}
\label{app:scaling}

\begin{figure}[t]
  \centering
  \includegraphics[width=0.9\textwidth]{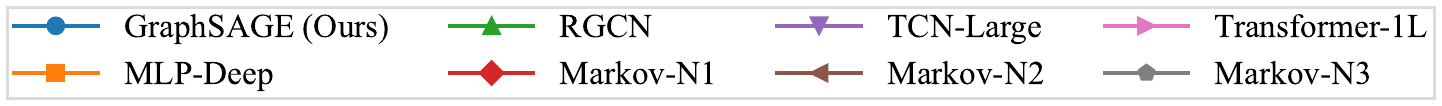}\\
  \begin{subfigure}[b]{0.32\textwidth}
    \centering
    \includegraphics[width=\textwidth]{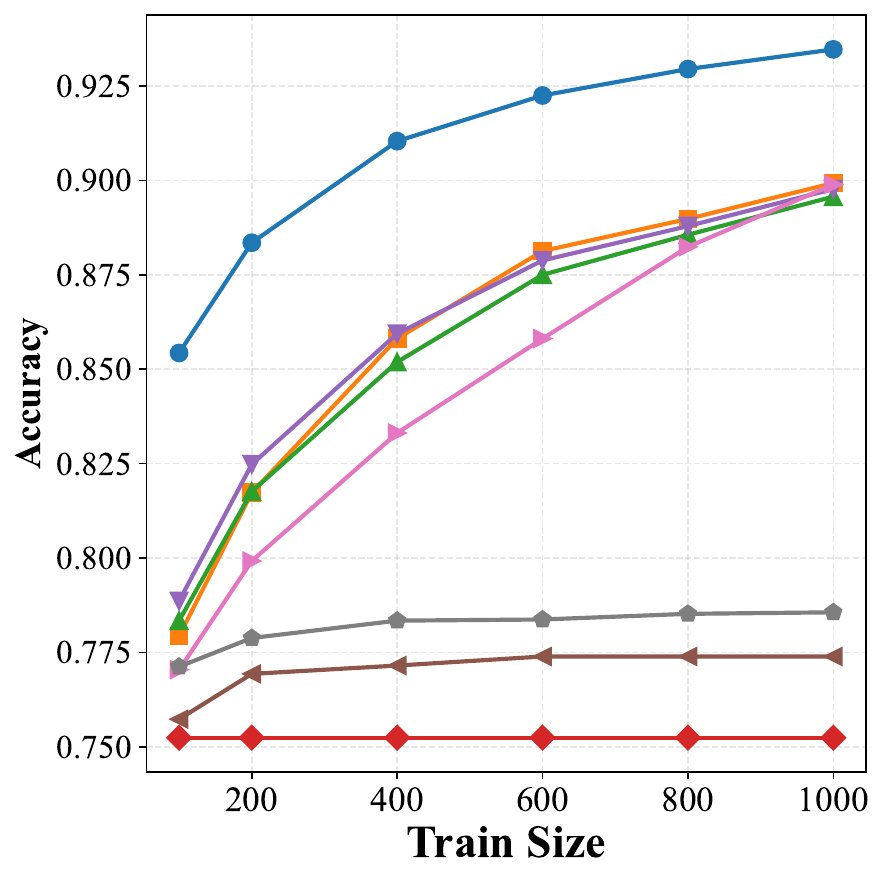}
    \caption{Prediction Accuracy of Step~1.}
  \end{subfigure}
  \begin{subfigure}[b]{0.32\textwidth}
    \centering
    \includegraphics[width=\textwidth]{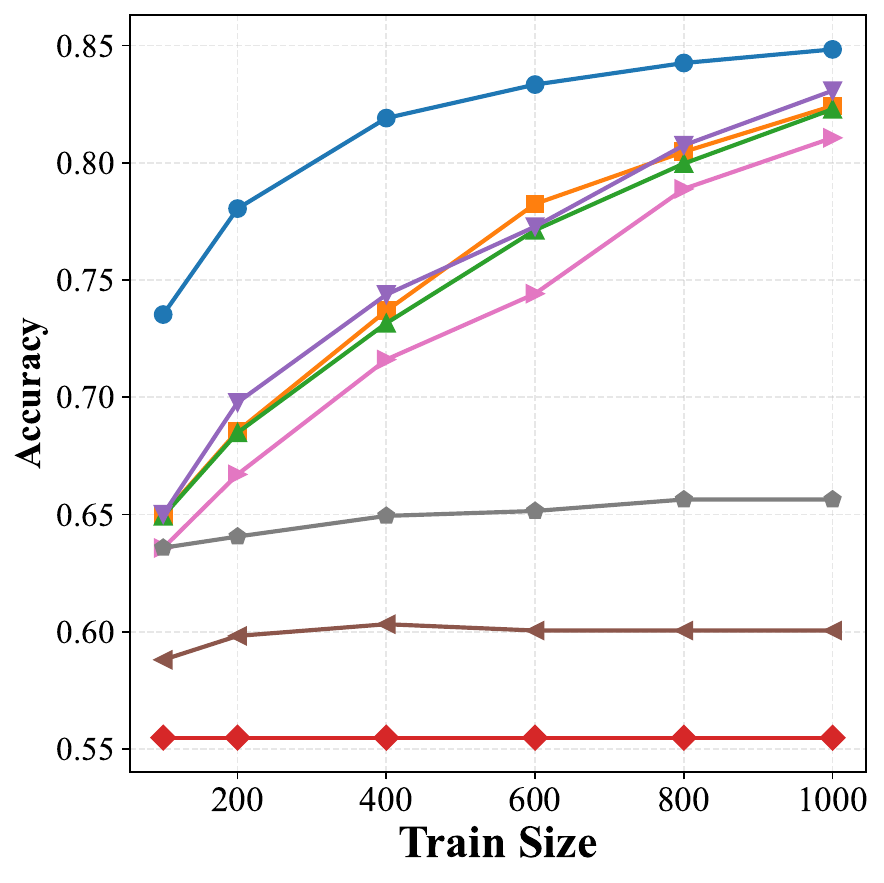}
    \caption{Prediction Accuracy of Step~2.}
  \end{subfigure}
  \begin{subfigure}[b]{0.32\textwidth}
    \centering
    \includegraphics[width=\textwidth]{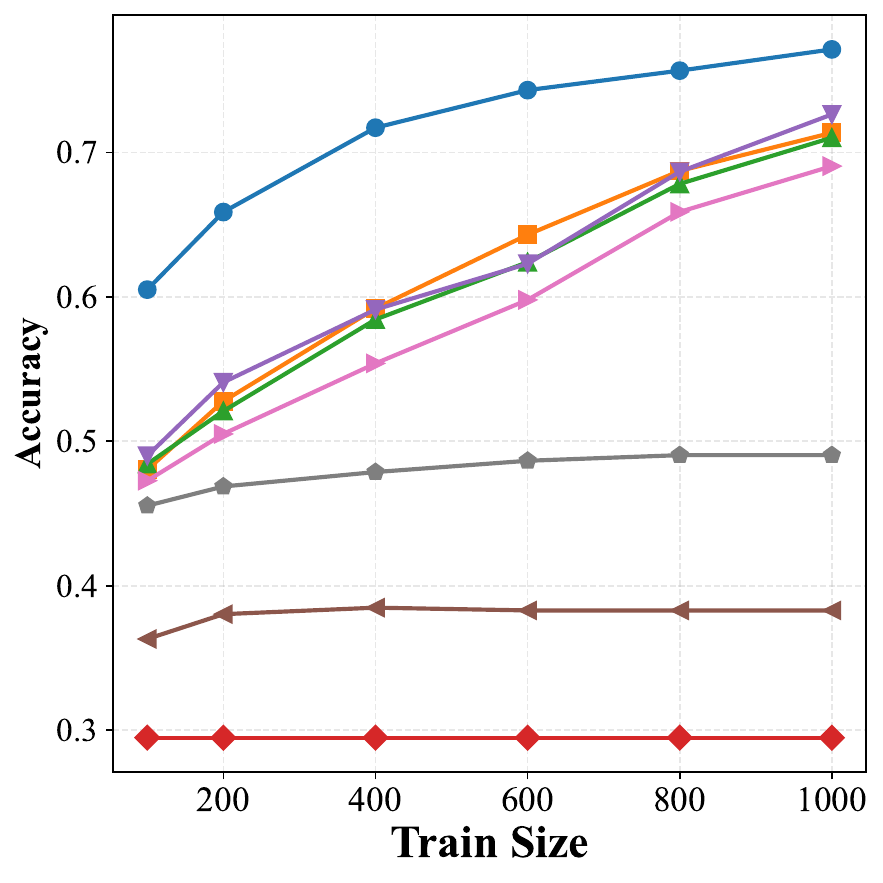}
    \caption{Prediction Accuracy of Step~3.}
  \end{subfigure}
  \caption{Top-$1$ prediction accuracy as a function of training-set size
  at horizons $k{=}1, 2, 3$. Our predictor leads every baseline at every
  train size shown.}
  \label{fig:scaling}
\end{figure}

Figure~\ref{fig:scaling} plots top-$1$ accuracy against the number of
training traces. Two observations emerge. First, learning-based
predictors scale with data while the Markov baseline saturates within
$100$ traces, justifying predicting from a learned representation rather
than from tabulated transition probabilities. Second, our predictor
leads every learning-based baseline at every horizon and the lead
widens as the horizon grows.

\subsection{Comparison Across Predictor Families}
\label{app:full}

\begin{table}[t]
  \centering
  \small
  \caption{Predictor comparison at $1000$ training traces, grouped by
  architectural family (mean~$\pm$~std over $10$ seeds).}
  \label{tab:full}
  \begin{tabular}{llccc}
    \toprule
    Family & Predictor & $s_1$ & $s_2$ & $s_3$ \\
    \midrule
    \textbf{GraphSAGE (Ours)}         &  & $\mathbf{0.935}\pm 0.007$ & $\mathbf{0.848}\pm 0.012$ & $\mathbf{0.771}\pm 0.019$ \\
    \midrule
    R-GCN        & \textsc{R-GCN}           & $0.896\pm 0.008$          & $0.823\pm 0.011$          & $0.710\pm 0.018$          \\
    \midrule
    \multirow{2}{*}{Transformer}
                 & \textsc{1L}              & $0.899\pm 0.008$          & $0.811\pm 0.023$          & $0.690\pm 0.034$          \\
                 & \textsc{2L}              & $0.896\pm 0.011$          & $0.810\pm 0.022$          & $0.688\pm 0.040$          \\
    \midrule
    \multirow{2}{*}{TCN}
                 & \textsc{small}           & $0.901\pm 0.006$          & $0.824\pm 0.015$          & $0.717\pm 0.019$          \\
                 & \textsc{large}           & $0.898\pm 0.006$          & $0.831\pm 0.007$          & $0.726\pm 0.019$          \\
    \midrule
    \multirow{3}{*}{MLP}
                 & \textsc{tiny}            & $0.786\pm 0.003$          & $0.656\pm 0.004$          & $0.494\pm 0.003$          \\
                 & \textsc{small}           & $0.889\pm 0.004$          & $0.801\pm 0.012$          & $0.684\pm 0.021$          \\
                 & \textsc{deep}            & $0.899\pm 0.006$          & $0.824\pm 0.016$          & $0.714\pm 0.024$          \\
    \midrule
    Linear Probe & \textsc{post-norm}            & $0.840\pm 0.026$          & $0.732\pm 0.026$          & $0.576\pm 0.036$          \\
    \midrule
    \multirow{2}{*}{kNN}
                 & \textsc{$k{=}5$}         & $0.839\pm 0.008$          & $0.699\pm 0.007$          & $0.564\pm 0.008$          \\
                 & \textsc{$k{=}20$}        & $0.817\pm 0.005$          & $0.707\pm 0.007$          & $0.576\pm 0.006$          \\
    \midrule
    \multirow{3}{*}{Markov}
                 & \textsc{$n{=}1$}         & $0.752\pm 0.000$          & $0.555\pm 0.000$          & $0.295\pm 0.000$          \\
                 & \textsc{$n{=}2$}         & $0.774\pm 0.000$          & $0.601\pm 0.000$          & $0.383\pm 0.000$          \\
                 & \textsc{$n{=}3$}         & $0.786\pm 0.000$          & $0.656\pm 0.000$          & $0.490\pm 0.000$          \\
    \bottomrule
  \end{tabular}
\end{table}

Table~\ref{tab:full} compares our predictor against every baseline
family we implemented: parameter-free baselines (Markov, kNN), a linear
probe on prefill states, MLPs, sequence models (Transformer~\citep{vaswani2017attention}, TCN~\citep{bai2018empirical_TCN}), and
a relational graph encoder (R-GCN~\citep{schlichtkrull2018modeling_RGCN}). Our predictor outperforms the
strongest variant of every other family on all three horizons. The lead widens further at longer
horizons, consistent with the multi-step argument that long-horizon
prediction benefits most from a rich summary of the workflow prefix.

\subsection{Choice of the Layer for Semantic Extraction}
\label{app:layer}

\begin{figure}[t]
  \centering
  \begin{subfigure}[b]{\textwidth}
    \centering
    \includegraphics[width=\textwidth]{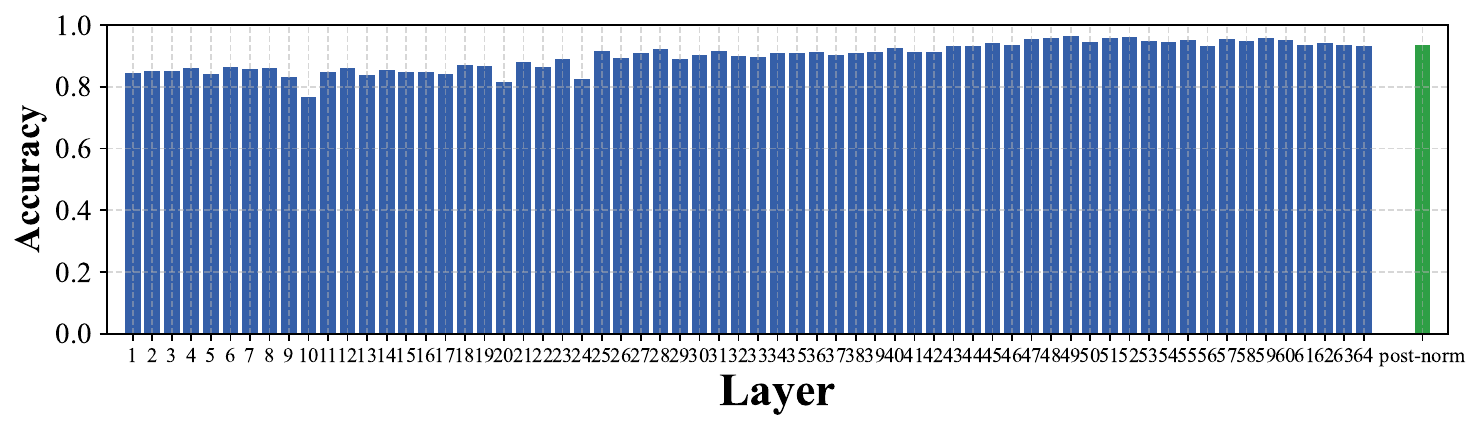}
    \caption{Prediction Accuracy of Step~1.}
  \end{subfigure}
  \begin{subfigure}[b]{\textwidth}
    \centering
    \includegraphics[width=\textwidth]{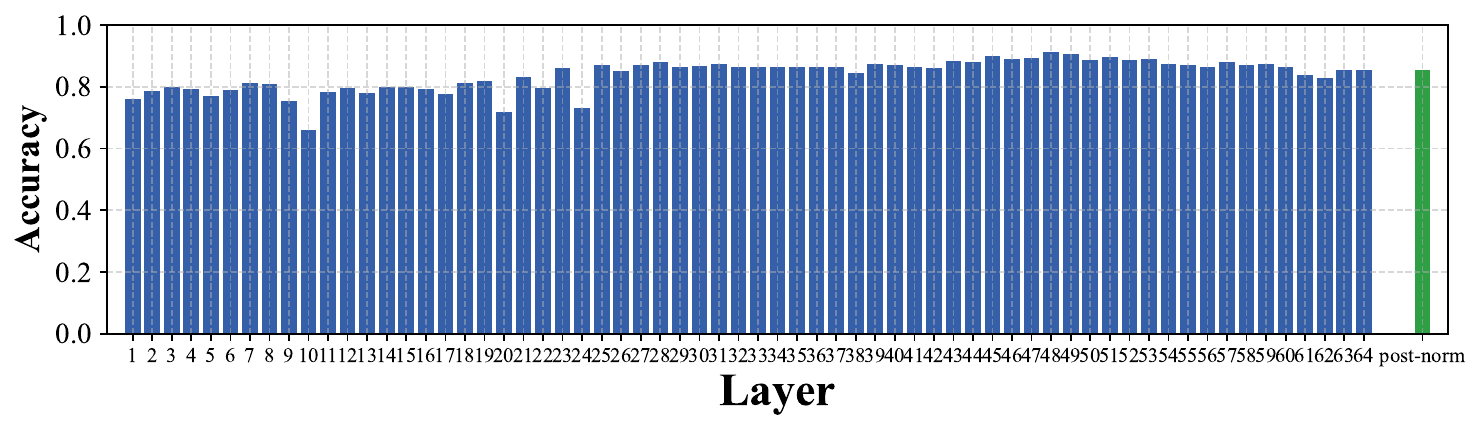}
    \caption{Prediction Accuracy of Step~2.}
  \end{subfigure}
  \begin{subfigure}[b]{\textwidth}
    \centering
    \includegraphics[width=\textwidth]{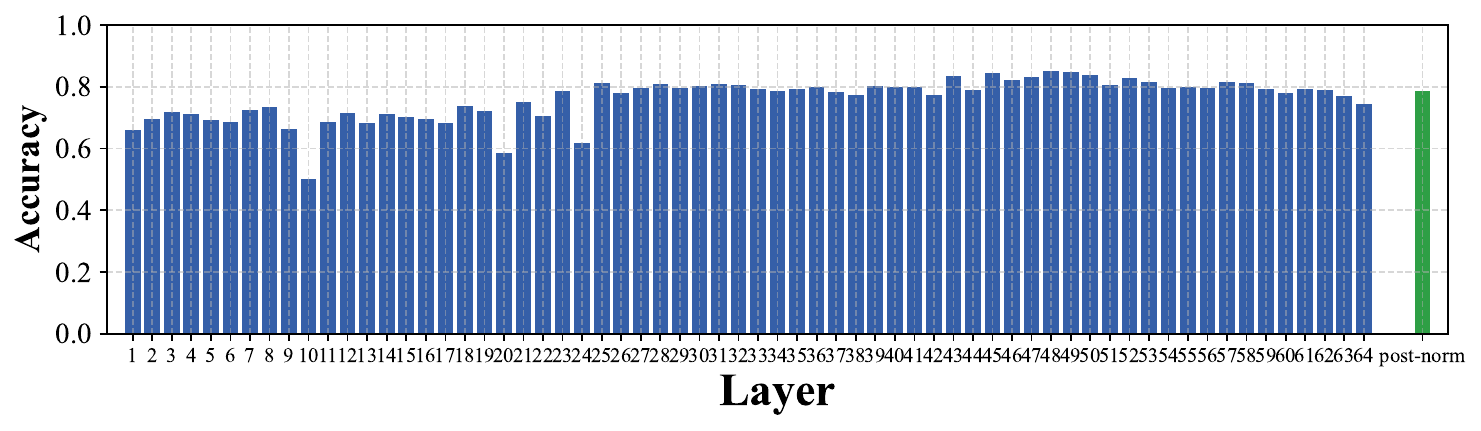}
    \caption{Prediction Accuracy of Step~3.}
  \end{subfigure}

  \caption{Top-$1$ prediction accuracy as a function of the layer from
  which the prefill semantic signal ${h}_{\text{txt}}$ is extracted.
  The served LLM is Qwen3-32B, which exposes $64$ transformer block
  outputs ($\ell{=}1, \ldots, 64$) followed by the post-norm hidden
  state \textit{post-norm} that is fed into the output head.}
  \label{fig:layer}
\end{figure}

The prefill semantic signal ${h}_{\text{txt}}$ described in
Section~\ref{sec:prediction} is extracted from a single layer of the served
LLM, and a natural question is which layer to choose. Prior
studies~\citep{shahout2025dont,skean2025layer,ulanovski2026improving} report that the optimal layer for hidden
state extraction is uncertain and depends jointly on the served model
and the downstream task. We therefore study this choice empirically on
Qwen3-32B by training the predictor with ${h}_{\text{txt}}$ taken
from each transformer block output ($\ell{=}1, \ldots, 64$) and from
the post-norm hidden state \textit{post-norm}, holding the rest of the
architecture and the training protocol fixed.

Figure~\ref{fig:layer} reports the resulting $s_1, s_2, s_3$ accuracies.
Consistent with prior observations, the optimal layer varies across
horizons and is highly specific to the served LLM and the workload at
hand: deploying our system on a different LLM or workload would require
re-running the layer scan to identify a new optimum. To preserve
compatibility with arbitrary served LLMs and workloads, and to keep the
predictor self-contained, we extract ${h}_{\text{txt}}$ from
\textit{post-norm} in all other experiments. The post-norm hidden state
is the standard input to the LLM's output head, and is therefore
universally accessible across LLMs without extra instrumentation,
regardless of how many transformer blocks the model contains.

\section{Sensitivity to the Confidence Decay Coefficient}
\label{app:gamma-sensitivity}
\begin{figure}[ht]
  \centering
    \centering
    \includegraphics[width=0.3\textwidth]{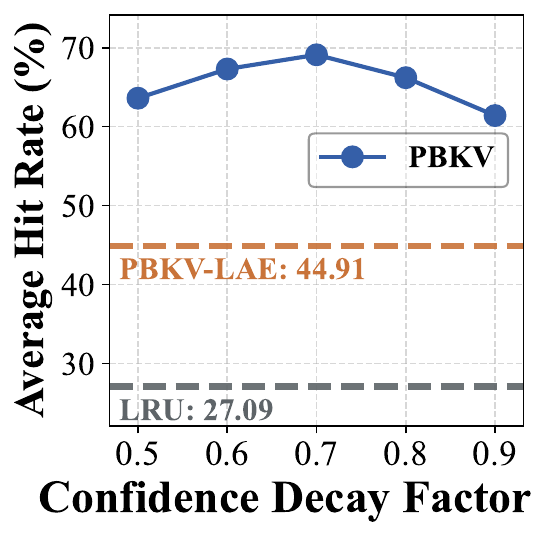}
    \caption{Average cache hit rate vs.\ confidence decay coefficient $\gamma$.}
    \label{appfig:gamma-sensitivity}
\end{figure}

The score computation uses a coefficient $\gamma$ to discount the contribution of farther-step predictions, reflecting their decreasing reliability. We study its sensitivity on Qwen3-32B under the HoVer+LangChain workload. As shown in Figure~\ref{appfig:gamma-sensitivity}, $\gamma=0.7$ achieves the best performance. A smaller $\gamma$ can cause the score to decay too rapidly, suppressing lookahead and degrading toward myopic behavior; while a larger $\gamma$ will over-weight unreliable predictions at distant steps, leading to misguided eviction decisions.
\section{Aggressive vs.\ Conservative Prefetching under Varying Prediction Accuracy}
\label{app:aggressive-prefetch}

\subsection{Motivation and Setup}

Section~\ref{sec:prefetch} argues that PBKV adopts a conservative prefetching principle: under dynamic workflows, the uncertainty of prediction makes ``displacing known-valuable active cache in exchange for speculatively valuable cache'' an asymmetric trade-off between deterministic cost and probabilistic benefit. This appendix empirically validates this design choice.

\stitle{Aggressive variant.} We construct an aggressive prefetching variant that relaxes only the space constraint of Section~\ref{sec:prefetch}, while keeping the rest of the design unchanged (i.e., the predictor, the scoring function, the activation only on pure-decode batches, and the PCIe bandwidth budget $S_{bw}$). Specifically, beyond the conservative budget $S_a$, the aggressive variant is additionally permitted to displace at most $\rho \cdot S_{\text{total}}$ of active cache to make room for prefetching, where $S_{\text{total}}$ denotes the total GPU cache capacity. Active nodes are evicted in ascending order of their lookahead score, ensuring that the lowest-scored nodes are displaced first.

\stitle{Prediction noise injection.} To isolate the effect of prediction accuracy, we apply a controlled perturbation to the predictor's output. Let $P_w^{(k)}$ denote the GraphSAGE next-step distribution for workflow $w$ at horizon $k$, and let $U$ denote the uniform distribution over agents. The perturbed distribution is:
\begin{equation}
\tilde{P}_w^{(k)} = (1-\lambda) \, P_w^{(k)} + \lambda \, U,
\end{equation}
where $\lambda \in [0, 1]$ denotes the noise level. $\lambda=0$ corresponds to the original predictor, while $\lambda=1$ degenerates to a uniform (i.e., random) prediction. The perturbation is applied to the predictor outputs consumed by both eviction and prefetching, simulating a system-wide degradation of the prediction signal.

\stitle{Workload and metrics.} We follow the main experimental setup, i.e., the HoVer + LangChain workload with Qwen3-32B at a concurrency of 72. Each configuration is repeated 10 times, and we report the mean and standard deviation of the average cache hit rate.

\subsection{Experiment 1: Sensitivity to the Prefetching Space Budget}

We first sweep $\rho$ under the noise-free setting $\lambda=0$ to identify the optimal aggressiveness of the variant.

\begin{figure}[ht]
\centering
\includegraphics[width=0.4\linewidth]{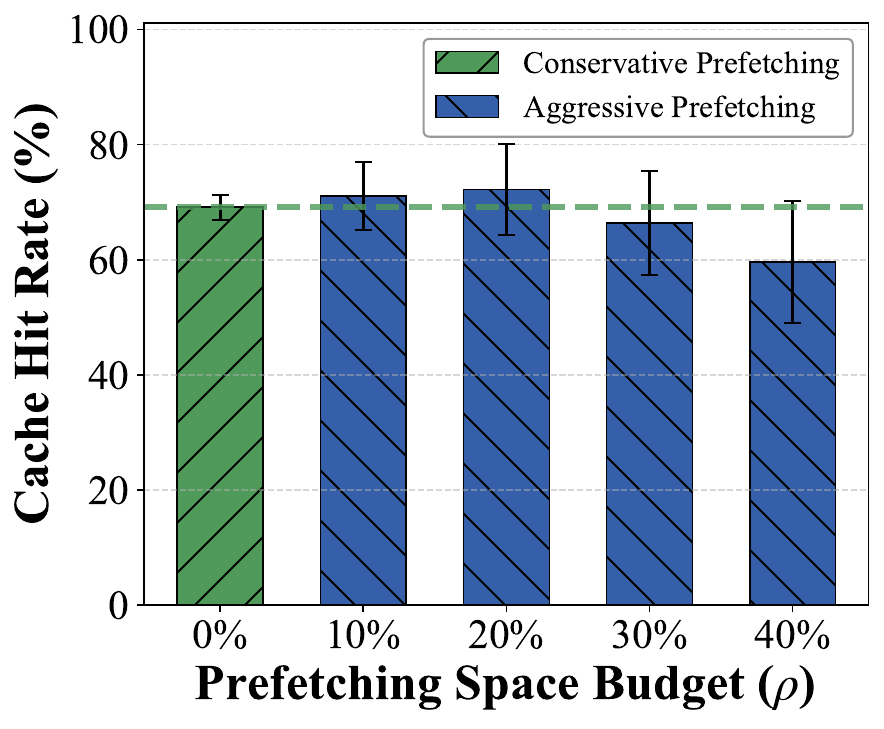}
\caption{Cache hit rate of conservative and aggressive prefetching across different prefetching space budgets $\rho$, under the noise-free setting ($\lambda=0$). Error bars denote standard deviation over 10 trials.}
\label{fig:prefetching-rho}
\end{figure}

As shown in Figure~\ref{fig:prefetching-rho}, the hit rate of aggressive prefetching first increases and then decreases with $\rho$, peaking at $\rho=20\%$ with roughly a 3-percentage-point margin over conservative prefetching. As $\rho$ further grows, the hit rate falls below the conservative baseline, indicating that the cumulative cost of mistakenly displacing active cache outweighs the gain from prefetching.

More notably, the standard deviation grows monotonically with $\rho$ even under the noise-free setting. At the optimal point $\rho^*=20\%$, the standard deviation is already $\sim 3.5\times$ that of conservative prefetching. This indicates that, although aggressive prefetching can deliver expected gains, it harms the stability of the system.

In the following analysis, we use $\rho^*=20\%$, at which aggressive prefetching achieves its best performance.

\subsection{Experiment 2: Sweeping the Prediction Noise}

Under $\rho = \rho^*$, we sweep the noise level and compare PBKV-HE (no-prefetching reference), conservative prefetching (the default in PBKV), and aggressive prefetching.

\begin{figure}[ht]
\centering
\includegraphics[width=0.5\linewidth]{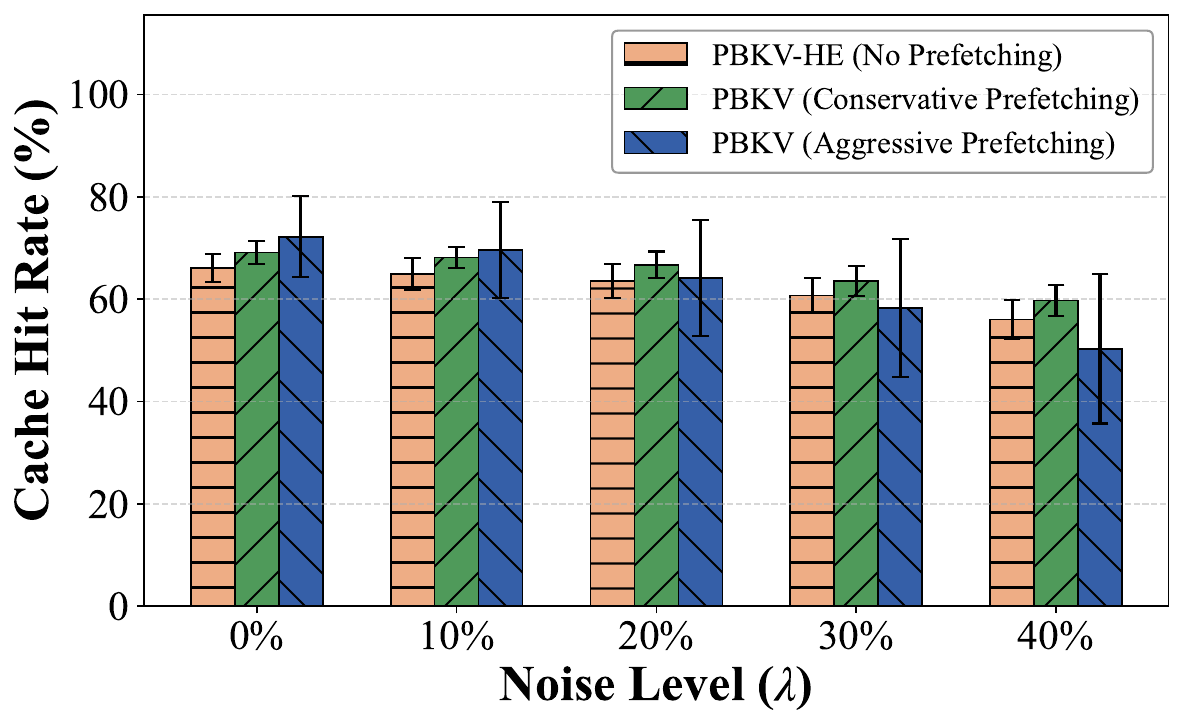}
\caption{Cache hit rate of PBKV-HE, conservative prefetching, and aggressive prefetching across different noise levels $\lambda$. Error bars denote standard deviation over 10 trials.}
\label{fig:prefetching-noise}
\end{figure}

As shown in Figure~\ref{fig:prefetching-noise}, two key observations emerge:

\noindent\textbf{(i) The advantageous regime of aggressive prefetching is narrow.} Only at $\lambda=0$ and $\lambda=10\%$ does aggressive prefetching outperform conservative prefetching, and the margin shrinks rapidly as the noise increases. At $\lambda=20\%$, aggressive prefetching is already overtaken by conservative prefetching. In practice, the realistic operating regime of aggressive prefetching is even narrower.

\noindent\textbf{(ii) Aggressive prefetching falls below PBKV-HE under moderate-to-high noise.} At $\lambda=30\%$, aggressive prefetching already underperforms PBKV-HE, which performs no prefetching at all; at $\lambda=40\%$ the gap further widens. This shows that aggressive prefetching becomes counterproductive under inaccurate predictions, with its cost outweighing its benefit. In contrast, conservative prefetching consistently outperforms PBKV-HE across all noise levels, reflecting its lower-bound guarantee.

\subsection{Justification of the Design Choice}

In summary, aggressive prefetching only achieves a marginal and unstable advantage over conservative prefetching when the prediction accuracy is sufficiently high (i.e., the noise level is below 20\%). However, dynamic multi-agent workflows are inherently hard to predict, especially under complex agent frameworks. Therefore, PBKV adopts conservative prefetching to obtain stable gains.
\section{Scheduler Overhead Analysis}
\label{app:scheduler-overhead}

A natural concern with prediction-driven cache management is whether the additional scheduler logic erodes the cache-hit-rate gains. We instrument every PBKV scheduler hook in our SGLang implementation and measure per-call wall-clock latency over a full run of the HoVer+LangChain workload on Qwen3-32B at a concurrency of 72, identical to the operating point used throughout Section~\ref{subsec:main-results}.
\begin{table}[ht]
\centering
\caption{Per-call latency of PBKV's scheduler components on Qwen3-32B with the HoVer+LangChain workload at concurrency 72. EvictDecision selects victim nodes on memory pressure; PredictInfer runs one forward pass of the GraphSAGE predictor; ScoreUpdate refreshes a cache node's heap entry after a new prediction; PrefetchDecision ranks host-resident candidates under the $(S_a, S_{bw})$ budget; DecodeStep is one batched decode iteration, included as a reference. \textbf{Note the unit difference}: ScoreUpdate is in microseconds (\textmu s); all other components in milliseconds (ms).}
\label{tab:overhead}
\begin{tabular}{lrr}
\toprule
\textbf{Component} & \textbf{Mean} & \textbf{P99} \\
\midrule
EvictDecision    & 0.44~ms        & 0.81~ms \\
PredictInfer     & 1.18~ms        & 1.68~ms \\
ScoreUpdate      & 1.53~\textbf{\textmu s} & 2.69~\textbf{\textmu s} \\
PrefetchDecision & 0.64~ms        & 1.60~ms \\
\midrule
DecodeStep       & 12.34~ms       & 43.14~ms \\
\bottomrule
\end{tabular}
\end{table}

Table~\ref{tab:overhead} reports the mean and P99 latency of EvictDecision, PredictInfer, ScoreUpdate, and PrefetchDecision, with DecodeStep included as a reference for relative cost.

Among them, EvictDecision, ScoreUpdate, and PrefetchDecision run on the CPU scheduler thread and operate purely on PBKV's heap-based bookkeeping structures, concurrent with the GPU decode kernel. PredictInfer executes on the GPU but is issued on a separate CUDA stream that overlaps with the ongoing decode kernel. None of the four therefore sit on the GPU critical path. ScoreUpdate in particular is essentially free at $1.53~\mu$s on average, precisely what allows per-node scores to be refreshed at high frequency without serializing the scheduler.
Eviction and prefetch decisions both average well under $1$~ms ($0.44$~ms and $0.64$~ms), and even predictor inference -- the dominant component -- averages just $1.18$~ms, less than $10\%$ of a mean decode step ($12.34$~ms), and the other components fall well below this. These overheads do not show up in end-to-end performance: PBKV achieves a $1.85\times$ speedup over LRU on this workload (Section~\ref{subsec:main-results}).
\section{Smoothness Analysis}
\label{app:smoothness}

\stitle{Overview.}
This appendix establishes the smoothness pillar of PBKV's algorithms-with-predictions guarantee. We organize the analysis into four layers. Lemma~\ref{lem:emc} identifies the per-node score $\mathrm{SCORE}(c)$ as the expected discounted miss count attributable to evicting $c$, thereby grounding the proxy cost in a system-relevant quantity. Lemma~\ref{lem:lipschitz} shows that this score is Lipschitz-continuous in the predicted distributions. Corollary~\ref{cor:rank} translates this score-level continuity into pairwise ranking stability. Theorem~\ref{thm:regret} further lifts it to a cost-level regret bound. All Lipschitz-type results share a single multiplier with a $K$-independent upper bound $1/\bigl(2(1-\gamma)\bigr)$ that is also independent of $|\mathcal{W}_{\mathrm{act}}|$ and the cache size, which justifies treating the predictor as a pluggable module: any future improvement in predictor accuracy directly tightens all bounds.

\subsection{Setup}
\label{app:smoothness:setup}

\stitle{Probabilistic model.}
We posit a ground-truth conditional distribution under which each active workflow $w$ produces, at every step $k$, a true distribution $P_w^{(k)} \in \Delta^{|V|}$ over $V \cup \{\langle\mathrm{END}\rangle\}$, conditioned on the workflow having not terminated before step $k$. Here $\Delta^{|V|}$ denotes the $|V|$-dimensional probability simplex over the $|V|+1$ outcomes in $V \cup \{\langle\mathrm{END}\rangle\}$, following the standard topological convention. The predictor outputs an estimate $\widehat{P}_w^{(k)}$ on the same simplex. Consistent with the sequential agent execution adopted by mainstream multi-agent frameworks (e.g., LangChain~\cite{langchain2022}, Microsoft AutoGen~\cite{wu2024autogen}, MetaGPT~\cite{hong2024metagpt}), we model each active workflow as invoking exactly one agent per step (or terminating), so that $P_w^{(k)}$ is the marginal distribution of the step-$k$ invocation conditioned on survival to step $k$. Let $p_{w,\langle\mathrm{END}\rangle}^{(j)}$ denote the conditional termination probability at step $j$. The cumulative survival factor follows the standard hazard-product decomposition $s_w^{(k)} = \prod_{j=1}^{k-1}\!\bigl(1 - p_{w,\langle\mathrm{END}\rangle}^{(j)}\bigr)$. Throughout, $\mathcal{W}_{\mathrm{act}}(c)$ denotes the set of active workflows associated with node $c$ at the moment the eviction decision is made.

\stitle{Eviction-only abstraction.}
Following the classical formulation of caching analysis pioneered by Belady~\cite{belady1966study} and adopted by recent algorithms-with-predictions work~\cite{lykouris2021competitive,rohatgi2020near}, we evaluate eviction policies under the \emph{eviction-only abstraction}, i.e., a node, once evicted, remains out of the cache for the entire $K$-step horizon over which the cost is measured. This abstraction isolates the quality of the eviction decision from orthogonal storage-hierarchy mechanisms (e.g., HiCache re-admission on miss), which are properties of the underlying system rather than of the eviction policy itself. PBKV's eviction policy operates on the same abstraction and is agnostic to whether a second-tier storage is present. The composition of our analysis with HiCache is discussed at the end of Section~\ref{app:smoothness:regret}.

\stitle{Access-indicator semantics.}
For each active workflow $w \in \mathcal{W}_{\mathrm{act}}(c)$, the access indicator $A_w(c) \in \{0,1\}^{|V|}$ marks the set of agents within $w$ that access $c$, denoted $\mathcal{O}_w(c) := \{a \in V : A_w(c)_a = 1\}$. Two cases arise. (i) For private cache, $|\mathcal{W}_{\mathrm{act}}(c)|$ is small and $A_w(c)$ is typically one-hot, since the owning agent of $c$ is determined by the upstream agent that produced it within $w$. (ii) For global cache (e.g., system prompts or tool/agent descriptions), $|\mathcal{W}_{\mathrm{act}}(c)|$ may be large and each $A_w(c)$ may have multiple ones, reflecting that distinct agents within the same workflow all access the shared prefix. The score formula in Eq.~\eqref{eq:multistep-score} treats both cases under a single mathematical form, with the cross-workflow outer sum protecting popular-prefix nodes by design (Section~\ref{sec:eviction}). For notational convenience, we adopt the convention that $A_w(c)$ is implicitly zero-padded on the $\langle\mathrm{END}\rangle$ coordinate, so that the inner product
\begin{equation}
\label{eq:zero-pad}
A_w(c)^{\top} P_w^{(k)} \;=\; \sum_{a \in V} A_w(c)_a\, P_w^{(k)}(a) \;=\; P_w^{(k)}\!\bigl(\mathcal{O}_w(c)\bigr)
\end{equation}
is dimensionally consistent and equals the probability that workflow $w$ invokes some agent in $\mathcal{O}_w(c)$ at step $k$.

\stitle{Past access equals future hit.}
The radix-tree prefix structure adopted by SGLang~\cite{sglang_NEURIPS2024_724be447} ensures that $A_w(c)$, defined from the observed access history, also characterizes future hits within the same workflow. Specifically, a cache node $c$ corresponds to a fixed token prefix; if agent $a$ has invoked the radix path through $c$ in $w$, then any subsequent invocation of $a$ within $w$ (e.g., under a retry loop) prepends the same system prompt and upstream context, and therefore traverses the same prefix and hits $c$ provided $c$ remains cached. Conversely, an agent that has never accessed $c$ does not have $c$ on its execution path and will not hit $c$ in future invocations within $w$. Hence, throughout the analysis, ``$w$ would hit $c$ at step $k$ if $c$ remained cached'' is equivalent to ``$w$ invokes some $a \in \mathcal{O}_w(c)$ at step $k$''.

\stitle{Unit-size abstraction and the role of node size.}
A natural question is why $\mathrm{SCORE}(c)$ in Eq.~\eqref{eq:multistep-score} does not contain an explicit $|c|$ factor. This is a deliberate design choice: for a cache node $c$ of size $|c|$, both the space cost of retaining it and the wall-clock cost of a miss on it scale linearly with $|c|$, since a miss triggers the re-prefill (or PCIe transfer under HiCache) of $|c|$ tokens. The two factors of $|c|$ therefore cancel out when the quantity of interest is value per unit space. By construction, $\mathrm{SCORE}(c)$ measures exactly this per-unit-space value, which is also the canonical ranking criterion of fractional-knapsack-style cache replacement: ranking by per-unit-space value, rather than by absolute value, is what minimizes total miss cost under a space budget. Multiplying $\mathrm{SCORE}(c)$ by $|c|$ would conflate ``valuable'' with ``large'' and bias eviction against small-but-hot nodes, which is the opposite of the intended behavior.

For the regret analysis, we adopt the standard unit-size abstraction~\cite{belady1966study,lykouris2021competitive,rohatgi2020near}, under which each node contributes an equal volume to the eviction budget and the budget-$B$ eviction reduces to selecting $B$ lowest-scoring nodes. This abstraction is consistent with the size-heterogeneous setting via a standard reduction in which each node of size $|c|$ is treated as $|c|$ unit-size virtual nodes sharing the same per-unit-space score $\mathrm{SCORE}(c)$, whereby cardinality-constrained selection over virtual nodes recovers size-constrained selection over original nodes. Since radix-tree nodes are page-sized in practice and $|c| \ll B$, the boundary effect of integer-budget rounding is at most one node and does not affect the asymptotic behavior of the bound. The Lipschitz and regret bounds below therefore transfer to the size-heterogeneous setting without modification of the multiplier $1/\bigl(2(1-\gamma)\bigr)$.

\stitle{Perturbation model.}
Following standard practice in algorithms-with-predictions analyses for caching~\cite{lykouris2021competitive,rohatgi2020near}, we measure prediction error via the per-step $\ell_1$ deviation and its node-local discount-weighted aggregate:
\begin{equation}
\label{eq:perturb}
\epsilon_w^{(k)} := \bigl\lVert P_w^{(k)} - \widehat{P}_w^{(k)} \bigr\rVert_1, \qquad
\epsilon_c^{\gamma} := \sum_{w\,\in\,\mathcal{W}_{\mathrm{act}}(c)} \sum_{k=1}^{K} \gamma^{k-1}\, \epsilon_w^{(k)}.
\end{equation}
Throughout, we let $\mathrm{SCORE}(c)$ and $\widehat{\mathrm{SCORE}}(c)$ denote the per-node scores computed from $\{P_w^{(k)}\}$ and $\{\widehat{P}_w^{(k)}\}$ respectively, both via Eq.~\eqref{eq:multistep-score}.

\subsection{From Score to Expected Miss Count}
\label{app:smoothness:emc}

We first establish the system-level meaning of $\mathrm{SCORE}(c)$, which grounds the subsequent regret analysis in a quantity of direct interest to caching theory.

\begin{lemma}[Score equals expected discounted miss count attributable to eviction]
\label{lem:emc}
Under the eviction-only abstraction (Section~\ref{app:smoothness:setup}), for any cache node $c$ that is evicted at the decision point and not re-admitted within the horizon,
\begin{equation}
\label{eq:emc}
\mathrm{EMC}(c) \;:=\; \sum_{k=1}^{K} \gamma^{k-1} \,\mathbb{E}\bigl[\text{misses on } c \text{ at step } k\bigr] \;=\; \mathrm{SCORE}(c),
\end{equation}
where the expectation is taken under the ground-truth distribution and the misses are aggregated across all $w \in \mathcal{W}_{\mathrm{act}}(c)$.
\end{lemma}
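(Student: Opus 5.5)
The plan is to write the expected miss count at each step as a sum of per-workflow contributions, and then match each contribution term by term with the summand of Eq.~\eqref{eq:multistep-score}. Fix a node $c$ that is evicted at the decision point. Under the eviction-only abstraction, $c$ stays out of the cache for all $k\le K$. So every would-be hit on $c$ during the horizon becomes a miss, and the number of misses on $c$ at step $k$ is
\[
\sum_{w\in\mathcal{W}_{\mathrm{act}}(c)} \mathbf{1}\{w \text{ hits } c \text{ at step } k\}.
\]
Linearity of expectation then reduces $\mathbb{E}[\text{misses on } c \text{ at step } k]$ to $\sum_{w} \Pr[w \text{ would hit } c \text{ at step } k]$. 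Each workflow invokes exactly one agent per step, so a given $w$ contributes at most one miss per step and no multiplicity correction is needed.

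Next, fix $w$ and $k$ and compute this probability. By the ``past access equals future hit'' property, the event ``$w$ would hit $c$ at step $k$'' coincides with ``$w$ is still active at step $k$ and invokes some $a\in\mathcal{O}_w(c)$ at step $k$''. Conditioning on survival and using the chain rule, the probability factors as
\[
\Pr[\text{survive to } k]\cdot \Pr[\text{invoke } a\in\mathcal{O}_w(c)\mid \text{survive to } k].
\]
The hazard-product decomposition identifies the first factor with $s_w^{(k)}=\prod_{j<k}(1-p^{(j)}_{w,\langle\mathrm{END}\rangle})$. By Eq.~\eqref{eq:zero-pad}, the second factor is $P_w^{(k)}(\mathcal{O}_w(c))=A_w(c)^\top P_w^{(k)}$; the zero-padding on the $\langle\mathrm{END}\rangle$ coordinate ensures that termination contributes nothing.

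Finally, multiply by $\gamma^{k-1}$, sum over $w$ and $k$, and exchange the two finite sums. The result is exactly $\sum_{k}\gamma^{k-1}\sum_{w} s_w^{(k)}A_w(c)^\top P_w^{(k)}=\mathrm{SCORE}(c)$.

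The main obstacle is the modeling step in the second paragraph, namely justifying the identity $\Pr[\text{survive to }k]=s_w^{(k)}$. This needs the $p^{(j)}_{w,\langle\mathrm{END}\rangle}$ to be the conditional hazards of the same ground-truth process whose step-$k$ conditional marginal is $P_w^{(k)}$, which is exactly what the probabilistic model in the setup posits. I would state this explicitly. I would also note that $\mathcal{W}_{\mathrm{act}}(c)$ is taken as fixed at the decision point, since workflows that become associated with $c$ later are outside the model. Retired nodes have $\mathcal{W}_{\mathrm{act}}(c)=\emptyset$, so both sides of Eq.~\eqref{eq:emc} are zero and the identity holds trivially.
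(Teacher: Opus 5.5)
Your proof is correct and follows essentially the same route as the paper. Both arguments identify the per-workflow miss indicator at step $k$ with the event that $w$ survives to step $k$ and invokes some $a \in \mathcal{O}_w(c)$, factor its probability as $s_w^{(k)} \cdot A_w(c)^{\top} P_w^{(k)}$ using the hazard-product survival and the conditional step-$k$ marginal, and then conclude by linearity of expectation over $w$ and discounted summation over $k$. Your additional remarks (on the hazard-product modeling assumption, on fixing $\mathcal{W}_{\mathrm{act}}(c)$ at the decision point, and on the trivial retired-node case) agree with the paper's setup and just make its implicit assumptions explicit.
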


\begin{proof}
Fix a cache node $c$ and a workflow $w \in \mathcal{W}_{\mathrm{act}}(c)$. Let $\#\mathrm{miss}_w^{(k)}(c) \in \{0,1\}$ denote the indicator that $w$ incurs a miss on $c$ at step $k$. Since $w$ invokes exactly one agent per step (or terminates), and any invocation of an agent $a \in \mathcal{O}_w(c)$ would have traversed $c$ by the radix-tree prefix structure (Section~\ref{app:smoothness:setup}), we have
\begin{equation}
\#\mathrm{miss}_w^{(k)}(c) \;=\; \mathbf{1}\!\bigl[w \text{ survives to step } k \text{ and invokes some } a \in \mathcal{O}_w(c)\bigr],
\end{equation}
where the equality uses that, under the eviction-only abstraction, every such invocation is a miss. The events $\{\text{$w$ invokes $a$ at step $k$}\}_{a \in V}$ are mutually exclusive conditional on survival to step $k$, so
\begin{equation}
\Pr\!\bigl[w \text{ invokes some } a \in \mathcal{O}_w(c) \,\big|\, w \text{ survives to } k\bigr] \;=\; \sum_{a \in \mathcal{O}_w(c)} P_w^{(k)}(a) \;=\; A_w(c)^{\top} P_w^{(k)}.
\end{equation}
Multiplying by $\Pr[w \text{ survives to } k] = s_w^{(k)}$ gives $\mathbb{E}\!\bigl[\#\mathrm{miss}_w^{(k)}(c)\bigr] = s_w^{(k)} \cdot A_w(c)^{\top} P_w^{(k)}$. Summing across $w \in \mathcal{W}_{\mathrm{act}}(c)$ by linearity of expectation and discounting by $\gamma^{k-1}$ over $k = 1, \ldots, K$, comparing with Eq.~\eqref{eq:multistep-score} yields $\mathrm{EMC}(c) = \mathrm{SCORE}(c)$.
\end{proof}

\stitle{Implications.}
Lemma~\ref{lem:emc} provides three benefits. (i) It removes any circularity between the scoring rule and the cost it is evaluated against, since $\mathrm{SCORE}(c)$ is now identified with a quantity defined purely from the ground-truth distribution and the cache state. (ii) It places PBKV's analysis on the same footing as classical caching results~\cite{belady1966study}, where eviction policies are evaluated through their effect on miss counts, independent of any specific reload mechanism. (iii) It motivates the proxy cost adopted in Section~\ref{app:smoothness:regret} as a system-relevant rather than ad-hoc surrogate.

\subsection{Lipschitz Continuity of the Score}
\label{app:smoothness:lipschitz}

We next show that $\mathrm{SCORE}(c)$ depends continuously on the predicted distributions, with a Lipschitz constant whose dependence on the prediction horizon $K$ admits a uniform $K$-independent upper bound.

\begin{lemma}[Lipschitz continuity of the score]
\label{lem:lipschitz}
For any cache node $c$ and any pair of distribution sets $\{P_w^{(k)}\}$ and $\{\widehat{P}_w^{(k)}\}$ in $\Delta^{|V|}$,
\begin{equation}
\label{eq:lipschitz}
\Bigl| \mathrm{SCORE}(c) - \widehat{\mathrm{SCORE}}(c) \Bigr|
\;\leq\; \frac{1 - \gamma^{K}}{2(1 - \gamma)}\, \epsilon_c^{\gamma}
\;\leq\; \frac{\epsilon_c^{\gamma}}{2(1 - \gamma)}.
\end{equation}
The Lipschitz multiplier admits a $K$-independent upper bound $1/\bigl(2(1-\gamma)\bigr)$ that is also independent of $|\mathcal{W}_{\mathrm{act}}(c)|$ and the cache size; the dependence on these quantities is fully isolated in $\epsilon_c^{\gamma}$.
\end{lemma}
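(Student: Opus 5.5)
The plan is to bound the score difference term by term in $(w,k)$, using two facts. First, each factor of a score term is a probability in $[0,1]$. Second, the predicted and true objects differ by quantities that are controlled, up to a factor $1/2$, by the $\ell_1$ errors $\epsilon_w^{(j)}$ from Eq.~\eqref{eq:perturb}. For fixed $c$, $w\in\mathcal{W}_{\mathrm{act}}(c)$ and $k$, write $f_w^{(k)} := s_w^{(k)}\,A_w(c)^{\top}P_w^{(k)}$. Let $\widehat f_w^{(k)}$ be its predicted analogue, in which the survival factor $\widehat s_w^{(k)}$ is also built from the predicted termination probabilities $\widehat p^{(j)}_{w,\langle\mathrm{END}\rangle}$. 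By the triangle inequality,
\[
\bigl|\mathrm{SCORE}(c)-\widehat{\mathrm{SCORE}}(c)\bigr|\le\sum_{k=1}^K\gamma^{k-1}\sum_{w}\bigl|f_w^{(k)}-\widehat f_w^{(k)}\bigr| ,
\]
so it suffices to control each $|f_w^{(k)}-\widehat f_w^{(k)}|$.

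\textbf{Step 1: split each term.} I would add and subtract a mixed term:
\[
|f_w^{(k)}-\widehat f_w^{(k)}|\le s_w^{(k)}\,\bigl|A_w(c)^{\top}(P_w^{(k)}-\widehat P_w^{(k)})\bigr| + \bigl|s_w^{(k)}-\widehat s_w^{(k)}\bigr|\,A_w(c)^{\top}\widehat P_w^{(k)} .
\]
Both $s_w^{(k)}$ and $A_w(c)^{\top}\widehat P_w^{(k)}$ lie in $[0,1]$. The latter holds by Eq.~\eqref{eq:zero-pad}, since it is the predicted probability of the event $\mathcal{O}_w(c)$.

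\textbf{Step 2: bound the first term.} By the zero-padding convention, $A_w(c)^{\top}P$ is the probability that $P$ assigns to the event $\mathcal{O}_w(c)$. For two distributions on the same simplex, the difference in probability of any event is at most their total-variation distance, which equals half the $\ell_1$ distance. Hence the first term is at most $\tfrac12\epsilon_w^{(k)}$.

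\textbf{Step 3: bound the second term.} Here I would telescope the product defining $s_w^{(k)}$. Every factor $1-p^{(j)}$ lies in $[0,1]$, so the standard product inequality $|\prod a_j-\prod b_j|\le\sum|a_j-b_j|$ for $a_j,b_j\in[0,1]$ gives
\[
|s_w^{(k)}-\widehat s_w^{(k)}|\le\sum_{j<k}\bigl|p^{(j)}_{w,\langle\mathrm{END}\rangle}-\widehat p^{(j)}_{w,\langle\mathrm{END}\rangle}\bigr|\le\tfrac12\sum_{j<k}\epsilon_w^{(j)},
\]
where the last inequality applies the total-variation argument to the singleton event $\{\langle\mathrm{END}\rangle\}$.

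\textbf{Step 4: reorganize the double sum.} Combining Steps 1 to 3, the difference is at most
\[
\tfrac12\sum_w\sum_{k=1}^K\gamma^{k-1}\sum_{j\le k}\epsilon_w^{(j)} .
\]
Exchanging the order of summation turns this into $\tfrac12\sum_w\sum_{j=1}^K\epsilon_w^{(j)}\sum_{k=j}^K\gamma^{k-1}$. The inner geometric sum equals $\gamma^{j-1}\,(1-\gamma^{K-j+1})/(1-\gamma)$. Since $K-j+1\le K$, it is at most $\gamma^{j-1}(1-\gamma^K)/(1-\gamma)$. Pulling this factor out leaves exactly $\tfrac{1-\gamma^K}{2(1-\gamma)}\,\epsilon_c^{\gamma}$. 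The second inequality of the lemma then follows from $1-\gamma^K\le1$.

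\textbf{Main obstacle.} I expect the survival factor to be the delicate point. Because $\widehat s_w^{(k)}$ depends on predictions at earlier steps, an error at step $j$ propagates into every later term $k>j$. The proof must absorb this cumulative propagation into the single discounted aggregate $\epsilon_c^{\gamma}$ without a $K$-dependent blow-up. The interchange in Step 4 is what makes this work: the error at step $j$ is multiplied by a tail geometric sum that starts at $\gamma^{j-1}$, which is exactly the weight that $\epsilon_c^{\gamma}$ already carries. I would double-check that Steps 2 and 3 can each legitimately take the factor $\tfrac12$, since they apply total variation to different events of the same distribution pair.
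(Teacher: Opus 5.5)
Your proposal is correct and follows the paper's proof essentially step for step. It uses the same total-variation bounds on the event $\mathcal{O}_w(c)$ and on the singleton $\{\langle\mathrm{END}\rangle\}$, the same telescoping of the survival product combined via $|uv-\widehat u\widehat v|\le|u-\widehat u|+|v-\widehat v|$, and the same exchange of summation with the tail bound $1-\gamma^{K-j+1}\le 1-\gamma^{K}$. (Your closing worry is unnecessary: Step~2 uses the step-$k$ pair $(P_w^{(k)},\widehat P_w^{(k)})$ while Step~3 uses the pairs at steps $j<k$, so both factors of $\tfrac12$ are legitimate.)
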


\begin{proof}
We bound the per-term deviation $\bigl| s_w^{(k)} A_w(c)^{\top} P_w^{(k)} - \widehat{s}_w^{(k)} A_w(c)^{\top} \widehat{P}_w^{(k)} \bigr|$ in two steps and then aggregate over $k$ and $w$.

\textit{Step 1 (per-term bound).}
The survival factor $s_w^{(k)}$ takes values in $[0,1]$ as a product of $[0,1]$ terms, and so does $\widehat{s}_w^{(k)}$. The standard telescoping identity for products of $[0,1]$-valued sequences gives
\begin{equation}
\label{eq:telescope}
\bigl| s_w^{(k)} - \widehat{s}_w^{(k)} \bigr| \;\leq\; \sum_{j=1}^{k-1} \bigl| p_{w,\langle\mathrm{END}\rangle}^{(j)} - \widehat{p}_{w,\langle\mathrm{END}\rangle}^{(j)} \bigr|.
\end{equation}
For any pair of probability distributions $P, Q$ on a finite outcome space and any subset $S$ of outcomes, the difference of subset-event probabilities is bounded by the total variation distance, $|P(S) - Q(S)| \leq \mathrm{TV}(P, Q) = \tfrac{1}{2}\|P - Q\|_1$. Applied to the singleton $S = \{\langle\mathrm{END}\rangle\}$, this yields $\bigl| p_{w,\langle\mathrm{END}\rangle}^{(j)} - \widehat{p}_{w,\langle\mathrm{END}\rangle}^{(j)} \bigr| \leq \tfrac{1}{2}\epsilon_w^{(j)}$, and substituting into~\eqref{eq:telescope} gives
\begin{equation}
\label{eq:survival-bound}
\bigl| s_w^{(k)} - \widehat{s}_w^{(k)} \bigr| \;\leq\; \frac{1}{2}\sum_{j=1}^{k-1} \epsilon_w^{(j)}.
\end{equation}
Similarly, by the zero-padding convention~\eqref{eq:zero-pad}, $A_w(c)^{\top} P_w^{(k)} = P_w^{(k)}\!\bigl(\mathcal{O}_w(c)\bigr) \in [0,1]$ is a subset-event probability under $P_w^{(k)}$, and the same holds for $A_w(c)^{\top} \widehat{P}_w^{(k)}$. Applying the same total-variation bound to the subset $S = \mathcal{O}_w(c)$,
\begin{equation}
\label{eq:access-bound}
\bigl| A_w(c)^{\top} P_w^{(k)} - A_w(c)^{\top} \widehat{P}_w^{(k)} \bigr| \;\leq\; \frac{1}{2}\,\epsilon_w^{(k)}.
\end{equation}
Combining \eqref{eq:survival-bound} and \eqref{eq:access-bound} via the elementary identity $|uv - \widehat{u}\widehat{v}| \leq |v - \widehat{v}| + |u - \widehat{u}|$ for $u,\widehat{u},v,\widehat{v} \in [0,1]$ yields
\begin{equation}
\label{eq:per-term}
\Bigl| s_w^{(k)} A_w(c)^{\top} P_w^{(k)} - \widehat{s}_w^{(k)} A_w(c)^{\top} \widehat{P}_w^{(k)} \Bigr| \;\leq\; \frac{1}{2}\sum_{j=1}^{k} \epsilon_w^{(j)}.
\end{equation}

\textit{Step 2 (aggregation).}
Multiplying \eqref{eq:per-term} by $\gamma^{k-1}$, summing over $k = 1,\dots,K$, and exchanging the order of summation between $k$ and $j$,
\begin{equation}
\sum_{k=1}^{K} \gamma^{k-1}\!\cdot\! \frac{1}{2}\sum_{j=1}^{k} \epsilon_w^{(j)}
\;=\; \frac{1}{2}\sum_{j=1}^{K} \gamma^{j-1} \epsilon_w^{(j)} \cdot \frac{1 - \gamma^{K-j+1}}{1-\gamma}
\;\leq\; \frac{1 - \gamma^{K}}{2(1-\gamma)} \sum_{j=1}^{K} \gamma^{j-1} \epsilon_w^{(j)},
\end{equation}
where the inner sum is a finite geometric series and the inequality uses $1 - \gamma^{K-j+1} \leq 1 - \gamma^{K}$ for $j \geq 1$ and $\gamma \in (0,1)$. Summing across $w \in \mathcal{W}_{\mathrm{act}}(c)$ yields the tighter form $(1-\gamma^{K})\epsilon_c^{\gamma}/\bigl(2(1-\gamma)\bigr)$ in Eq.~\eqref{eq:lipschitz}, and the looser corollary $\epsilon_c^{\gamma}/\bigl(2(1-\gamma)\bigr)$ follows from $1 - \gamma^{K} \leq 1$.
\end{proof}

\stitle{Locality remark.}
Only workflows in $\mathcal{W}_{\mathrm{act}}(c)$ contribute to $\epsilon_c^{\gamma}$, so prediction errors on workflows $w$ with $A_w(c) = 0$ leave $\mathrm{SCORE}(c)$ unchanged. This locality reflects the radix-tree prefix structure respected by the scoring rule and matches the design intent of Section~\ref{sec:eviction}.

\begin{corollary}[Ranking stability]
\label{cor:rank}
Let $c_1, c_2$ be two cache nodes with $\mathrm{SCORE}(c_1) > \mathrm{SCORE}(c_2)$, and let $\Delta := \mathrm{SCORE}(c_1) - \mathrm{SCORE}(c_2)$. If
\begin{equation}
\frac{1 - \gamma^{K}}{2(1 - \gamma)}\bigl(\epsilon_{c_1}^{\gamma} + \epsilon_{c_2}^{\gamma}\bigr) \;<\; \Delta,
\end{equation}
then $\widehat{\mathrm{SCORE}}(c_1) > \widehat{\mathrm{SCORE}}(c_2)$, and the eviction preference between $c_1$ and $c_2$ is preserved.
\end{corollary}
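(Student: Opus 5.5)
The plan is to derive the ranking-stability statement directly from Lemma~\ref{lem:lipschitz}, applied separately to $c_1$ and $c_2$. No new probabilistic argument is needed: the corollary is a two-sided triangle-inequality consequence of the score-level Lipschitz bound.

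Write $L_K := \frac{1-\gamma^K}{2(1-\gamma)}$. By the tighter form of Eq.~\eqref{eq:lipschitz}, for each $i\in\{1,2\}$,
\begin{equation*}
\widehat{\mathrm{SCORE}}(c_i) \in \bigl[\mathrm{SCORE}(c_i) - L_K\epsilon_{c_i}^{\gamma},\ \mathrm{SCORE}(c_i) + L_K\epsilon_{c_i}^{\gamma}\bigr].
\end{equation*}
Taking the lower end of the interval for $c_1$ and the upper end for $c_2$ gives
\begin{equation*}
\widehat{\mathrm{SCORE}}(c_1) - \widehat{\mathrm{SCORE}}(c_2) \;\geq\; \Delta - L_K\bigl(\epsilon_{c_1}^{\gamma} + \epsilon_{c_2}^{\gamma}\bigr).
\end{equation*}
The hypothesis of the corollary makes the right-hand side strictly positive, so $\widehat{\mathrm{SCORE}}(c_1) > \widehat{\mathrm{SCORE}}(c_2)$.

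To conclude that the eviction preference is preserved, recall from Section~\ref{subsec:score} that PBKV evicts in ascending order of score. A strict inequality between the predicted scores therefore ranks $c_2$ ahead of $c_1$ for eviction, which is exactly the order induced by the true scores. One caveat concerns the hierarchical layer of Section~\ref{subsec:hierarchical}. Retired nodes have $\mathcal{W}_{\mathrm{act}}=\emptyset$, so both their true and predicted scores are $0$ and their error term $\epsilon^{\gamma}$ is $0$. The argument therefore covers them with no change, and the retired-first tier is consistent with it.

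The only real obstacle is making sure the two Lipschitz bounds can be applied at the same time. Each $\widehat{\mathrm{SCORE}}(c_i)$ depends only on the predictions for workflows in $\mathcal{W}_{\mathrm{act}}(c_i)$, and the error charged to node $c_i$ is $\epsilon_{c_i}^{\gamma}$. A workflow shared by both nodes thus contributes to both error terms, and that is harmless: both bounds are deterministic statements about the same pair of distribution sets $\{P_w^{(k)}\}$ and $\{\widehat{P}_w^{(k)}\}$. Because no independence is required, the sum $\epsilon_{c_1}^{\gamma}+\epsilon_{c_2}^{\gamma}$ is a valid, if possibly loose, budget.
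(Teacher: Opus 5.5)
Your proposal is correct and matches the paper's proof. The paper likewise applies Lemma~\ref{lem:lipschitz} (in its tighter $(1-\gamma^K)/(2(1-\gamma))$ form) separately to $c_1$ and $c_2$ and combines the two bounds by the triangle inequality; your remarks on shared workflows and retired nodes are accurate but not needed.
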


\begin{proof}
By Lemma~\ref{lem:lipschitz} applied to $c_1$ and $c_2$ separately, together with the triangle inequality.
\end{proof}

\subsection{Eviction Cost Regret}
\label{app:smoothness:regret}

We now lift the score-level continuity to a cost-level regret bound, which is the form prescribed by the smoothness desideratum of the algorithms-with-predictions framework~\cite{mitzenmacher2022algorithms}.

\stitle{Proxy cost.}
We adopt
\begin{equation}
\label{eq:proxy-cost}
\mathcal{L}(E) \;:=\; \sum_{c \in E} \mathrm{SCORE}(c),
\end{equation}
which, by Lemma~\ref{lem:emc}, equals the expected discounted miss count incurred by evicting the set $E$ under the eviction-only abstraction. This grounds the proxy cost in the canonical quantity studied since Belady~\cite{belady1966study} and isolates the eviction-policy quality from orthogonal storage-hierarchy concerns. The same surrogate has also been used in algorithms-with-predictions analyses for caching~\cite{lykouris2021competitive,rohatgi2020near}, where costs are likewise defined through predicted access patterns rather than realized latencies.

\stitle{Regret definition.}
For the regret analysis, we specialize $\{P_w^{(k)}\}$ to the ground-truth distributions and $\{\widehat{P}_w^{(k)}\}$ to the predictor outputs. Freeing a budget of $B$ nodes reduces to a cardinality-constrained selection (Section~\ref{app:smoothness:setup}). PBKV ranks candidate nodes by $\widehat{\mathrm{SCORE}}$ and evicts
\begin{equation}
\widehat{E}_B \;:=\; \arg\min_{|E| = B} \sum_{c \in E} \widehat{\mathrm{SCORE}}(c),
\end{equation}
whereas the cost-minimizing eviction set under the ground truth is $E_B^{\star} := \arg\min_{|E| = B} \mathcal{L}(E)$. The eviction cost regret of PBKV is
\begin{equation}
\mathcal{R}(B) \;:=\; \mathcal{L}\bigl(\widehat{E}_B\bigr) - \mathcal{L}\bigl(E_B^{\star}\bigr) \;\geq\; 0,
\end{equation}
where non-negativity follows from the optimality of $E_B^{\star}$ for $\mathcal{L}$.

\begin{theorem}[Eviction cost regret bound]
\label{thm:regret}
Under the eviction-only abstraction and the unit-size abstraction (Section~\ref{app:smoothness:setup}), for any predictor outputs $\{\widehat{P}_w^{(k)}\}$, the eviction cost regret of PBKV satisfies
\begin{equation}
\label{eq:regret}
\mathcal{R}(B) \;\leq\; \sum_{c\,\in\,\widehat{E}_B \triangle E_B^{\star}} \delta_c
\;\leq\; \frac{1 - \gamma^{K}}{2(1 - \gamma)}\!\!\sum_{c\,\in\,\widehat{E}_B \triangle E_B^{\star}} \!\epsilon_c^{\gamma},
\end{equation}
where $\triangle$ denotes the symmetric set difference and $\delta_c := \bigl|\mathrm{SCORE}(c) - \widehat{\mathrm{SCORE}}(c)\bigr|$. As the prediction error vanishes, $\mathcal{R}(B) \to 0$.
\end{theorem}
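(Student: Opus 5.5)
The bound follows from a standard exchange argument between the two eviction sets, followed by the score-level Lipschitz bound of Lemma~\ref{lem:lipschitz}. Write $S(c) := \mathrm{SCORE}(c)$ and $\widehat S(c) := \widehat{\mathrm{SCORE}}(c)$, so that $\delta_c = |S(c)-\widehat S(c)|$. Because both sets have cardinality $B$, the two difference sets $D_1 := \widehat{E}_B\setminus E_B^\star$ and $D_2 := E_B^\star\setminus\widehat{E}_B$ have the same size. The common part $\widehat E_B\cap E_B^\star$ cancels in the difference of costs, which gives
\begin{equation*}
\mathcal{R}(B) \;=\; \sum_{c\in D_1} S(c) \;-\; \sum_{c\in D_2} S(c).
\end{equation*}
Non-negativity holds immediately because $E_B^\star$ minimizes $\mathcal{L}$ over all sets of size $B$.

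For the first inequality, I add and subtract predicted scores. This gives
\begin{equation*}
\mathcal{R}(B) \;=\; \sum_{c\in D_1}\bigl(S(c)-\widehat S(c)\bigr) \;+\; \Bigl[\sum_{c\in D_1}\widehat S(c)-\sum_{c\in D_2}\widehat S(c)\Bigr] \;+\; \sum_{c\in D_2}\bigl(\widehat S(c)-S(c)\bigr).
\end{equation*}
The middle bracket equals $\sum_{c\in\widehat E_B}\widehat S(c)-\sum_{c\in E_B^\star}\widehat S(c)$, once the shared intersection is added back to both sums. This quantity is $\le 0$ because $\widehat E_B$ minimizes the predicted cost over sets of size $B$. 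Each remaining term is at most $\delta_c$, and $D_1\cup D_2=\widehat E_B\triangle E_B^\star$. Together these give $\mathcal{R}(B)\le\sum_{c\in\triangle}\delta_c$.

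The second inequality then follows by applying Lemma~\ref{lem:lipschitz} node by node: $\delta_c\le\frac{1-\gamma^K}{2(1-\gamma)}\epsilon_c^\gamma$. Since $1-\gamma^K\le 1$, this also yields the $K$-independent form stated in Theorem~\ref{thm:smoothness}. As the prediction error vanishes, each $\epsilon_c^\gamma\to0$, and $|\triangle|\le 2B$ is a fixed finite size, so the bound tends to zero. Linearity in the error is clear from the form of the bound.

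The only delicate step is the middle bracket. Its sign must come from the optimality of $\widehat E_B$ for $\widehat S$, not from any pairwise ordering of nodes, so no matching between $D_1$ and $D_2$ is needed. Ties in the argmin cause no trouble, because any minimizer satisfies the same inequality. Under the unit-size abstraction there are no rounding issues to handle.
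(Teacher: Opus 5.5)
Your proposal is correct and matches the paper's proof. Both partition $\widehat{E}_B\cup E_B^{\star}$ into the two set differences and the intersection, use the optimality of $\widehat{E}_B$ for the predicted scores to control the cross term, bound each remaining term by $\delta_c$, and then apply Lemma~\ref{lem:lipschitz} node by node. Your use of $|\widehat{E}_B \triangle E_B^{\star}|\le 2B$ for the vanishing-error limit makes explicit a step the paper leaves implicit.
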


\begin{proof}
Partition $\widehat{E}_B \cup E_B^{\star}$ into $\mathcal{A} := \widehat{E}_B \setminus E_B^{\star}$, $\mathcal{B} := E_B^{\star} \setminus \widehat{E}_B$, and $\mathcal{C} := \widehat{E}_B \cap E_B^{\star}$. Both eviction sets have cardinality $B$, hence $|\mathcal{A}| = |\mathcal{B}|$. By construction of $\widehat{E}_B$ as the minimizer of $\sum_{c \in E} \widehat{\mathrm{SCORE}}(c)$ over cardinality-$B$ sets, and after cancelling the common $\mathcal{C}$-terms,
\begin{equation}
\label{eq:opt-ineq}
\sum_{c \in \mathcal{A}} \widehat{\mathrm{SCORE}}(c) \;\leq\; \sum_{c \in \mathcal{B}} \widehat{\mathrm{SCORE}}(c).
\end{equation}
Therefore
\begin{align}
\mathcal{R}(B) &= \sum_{c \in \mathcal{A}} \mathrm{SCORE}(c) - \sum_{c \in \mathcal{B}} \mathrm{SCORE}(c) \\
&\leq \sum_{c \in \mathcal{A}}\!\bigl(\mathrm{SCORE}(c) - \widehat{\mathrm{SCORE}}(c)\bigr) + \sum_{c \in \mathcal{B}}\!\bigl(\widehat{\mathrm{SCORE}}(c) - \mathrm{SCORE}(c)\bigr) \\
&\leq \sum_{c \in \mathcal{A} \cup \mathcal{B}}\!\delta_c \;=\; \sum_{c\,\in\,\widehat{E}_B \triangle E_B^{\star}}\!\delta_c,
\end{align}
where the first inequality follows from \eqref{eq:opt-ineq} and the second from $|\mathrm{SCORE}(c) - \widehat{\mathrm{SCORE}}(c)| \leq \delta_c$ on each side. The remaining inequality of Eq.~\eqref{eq:regret} follows by applying Lemma~\ref{lem:lipschitz} to each $\delta_c$.
\end{proof}

\stitle{Interpretation.}
Theorem~\ref{thm:regret} bounds the regret of PBKV's eviction decisions on the expected discounted miss count, the canonical cost in caching analysis dating back to Belady~\cite{belady1966study}. Two observations are worth emphasizing. First, the multiplier $(1-\gamma^{K})/\bigl(2(1-\gamma)\bigr)$ matches that of Lemma~\ref{lem:lipschitz}, so the score-level and cost-level bounds share a single $K$-independent upper bound $1/\bigl(2(1-\gamma)\bigr)$. Second, only nodes in the symmetric difference $\widehat{E}_B \triangle E_B^{\star}$ contribute to $\mathcal{R}(B)$. A correctly ranked node, relative to the eviction boundary, contributes zero regret regardless of the magnitude of its individual score error. This boundary-localized regret structure indicates that the bound is governed by ranking errors near the eviction frontier rather than by aggregate score errors, consistent with the ranking-stability view of Corollary~\ref{cor:rank}.

\subsection{Summary}

This appendix establishes the smoothness pillar of PBKV's algorithms-with-predictions guarantee along four layers: identification of $\mathrm{SCORE}(c)$ as the expected discounted miss count under the eviction-only abstraction (Lemma~\ref{lem:emc}), per-node score Lipschitz continuity (Lemma~\ref{lem:lipschitz}), pairwise ranking stability (Corollary~\ref{cor:rank}), and cost-level regret on the resulting proxy cost (Theorem~\ref{thm:regret}). All Lipschitz-type results share a single $K$-independent multiplier $1/\bigl(2(1-\gamma)\bigr)$, independent of $|\mathcal{W}_{\mathrm{act}}|$ and the cache size. By analyzing eviction quality under the same eviction-only abstraction adopted in classical caching theory, the analysis treats PBKV's eviction policy as a self-contained component decoupled from any specific storage-hierarchy mechanism. This justifies treating the predictor as a pluggable module in PBKV's design (Section~\ref{sec:eviction}): any future improvement in predictor accuracy directly tightens all bounds, and the bounds themselves remain meaningful in deployments without second-tier storage.

% \newpage
% \input{checklist.tex}

\end{document}